\documentclass{article}
\usepackage{iclr2027_conference,times}

\usepackage{amsmath,amsfonts,bm}

\def\eqref#1{equation~\ref{#1}}

\def\1{\bm{1}}

\def\re{{\textnormal{e}}}

\DeclareMathAlphabet{\mathsfit}{\encodingdefault}{\sfdefault}{m}{sl}
\SetMathAlphabet{\mathsfit}{bold}{\encodingdefault}{\sfdefault}{bx}{n}

\usepackage{hyperref}
\hypersetup{hidelinks}
\usepackage{url}
\usepackage{booktabs}
\usepackage{amsmath,amssymb}
\usepackage{amsthm}
\newtheorem{proposition}{Proposition}
\usepackage{graphicx}
\usepackage{float}
\usepackage{tikz}
\usetikzlibrary{positioning,arrows.meta,calc,fit}
\usepackage{xcolor}

\title{RAOA: Alternating-Operator Neural Computation with Programmable Radio Propagation}

\author{Toshiaki Koike-Akino\\Mitsubishi Electric Research Laboratories (MERL), 201 Broadway, Cambridge, MA 02139, USA}
\iclrfinalcopy
\begin{document}
\maketitle

\begin{abstract}
Can programmable radio propagation serve as computational depth rather than only as a communication channel or one-shot analog transform? We introduce the \textbf{Radio Alternating Operator Ansatz (RAOA)}, a recurrent computing architecture that alternates an energy-derived problem update with a mixing update over a persistent latent state. Recomputing the problem field after each mix makes repeated passes compositional even when the same learned controls are reused across depth. We evaluate this idea through exact discrete optimization, constrained programmable-propagation simulation, and pretrained-model adaptation. On discrete objectives, repeated execution can improve solution quality without increasing the learned-control count, and the same formulation handles higher-order interactions directly. A passive phase-only free-space model further shows that the required operators can be approximated by programmable propagation while retaining useful downstream behavior despite realization error. When inserted as a zero-initialized residual adapter, RAOA adapts pretrained language models with WikiText performance close to a matched shallow MLP across three model families, while reasoning-task transfer remains model-dependent. Together, these results connect alternating-operator computation, programmable radio propagation, and neural adaptation within one recurrent framework. The RF realization evidence is simulation-based rather than a hardware demonstration.
\end{abstract}

\section{Introduction}\label{sec:intro}

Wireless systems usually separate \emph{propagation} from \emph{computation}: radio waves carry information, then digital processors transform the received samples. Reconfigurable intelligent surfaces (RISs), programmable metasurfaces, computational metamaterials, and physical neural networks blur this boundary by making the transfer function itself programmable~\citep{silva2014metamaterials,cui2014coding,zangeneh2021analogue,wright2022physical,momeni2025training}. This motivates a broader question than beam steering or one-shot analog transforms: \emph{can a programmable radio path be reused as computational depth?}

We study this question through the \textbf{Radio Alternating Operator Ansatz (RAOA)}. RAOA maintains a latent state and alternates a problem-dependent pass with a mixing pass. The local field is re-evaluated after mixing, so repeated modeled passes compose state-dependent transformations even when the learned controls are tied across depth. Figure~\ref{fig:adapter} shows the intended interpretation: the same programmable path is reconfigured between the two pass types, receiver state is fed back, and input and output projections wrap the recurrent state as a residual Transformer adapter. The experiments instantiate this loop as a differentiable software model, not an RF prototype.

\begin{figure}[t]
\centering
\resizebox{0.96\linewidth}{!}{\begin{tikzpicture}[x=1cm,y=1cm,font=\small,>=Latex]
\tikzset{
  flow/.style={-{Latex[length=2.2mm]},line width=0.8pt,draw=black!60},
  feedback/.style={-{Latex[length=2.2mm]},line width=0.8pt,draw=teal!55!black},
  box/.style={draw=black!45,rounded corners=2pt,fill=gray!4,align=center,inner sep=4pt},
  softblue/.style={draw=blue!35!black,rounded corners=2pt,fill=blue!7,align=center,inner sep=4pt},
  softorange/.style={draw=orange!45!black,rounded corners=2pt,fill=orange!9,align=center,inner sep=4pt},
  softgreen/.style={draw=green!35!black,rounded corners=2pt,fill=green!7,align=center,inner sep=4pt},
  panel/.style={draw=black!20,rounded corners=3pt,fill=black!1,line width=0.6pt}
}

\node[anchor=west,font=\bfseries] at (0,4.20) {(a) Conceptual physical RAOA loop};
\draw[panel] (-0.15,0.08) rectangle (15.45,4.00);

\node[softblue,minimum width=1.95cm] (pctrl) at (3.20,3.34) {problem control\\$\gamma_\ell,\,J,\,b(c)$};
\node[softorange,minimum width=1.70cm] (mctrl) at (10.70,3.34) {mixer control\\$\beta_\ell$};

\node[font=\scriptsize,align=center] at (0.95,3.26) {transmitter\\encodes $z^{(\ell)}$};
\draw[black!60,line width=0.75pt] (0.95,2.30)--(0.95,1.30);
\draw[black!60,line width=0.75pt] (0.95,2.30)--(0.72,2.63);
\draw[black!60,line width=0.75pt] (0.95,2.30)--(1.18,2.63);
\draw[black!60,line width=0.75pt] (0.72,2.63)--(1.18,2.63);

\node[font=\scriptsize,align=center] at (5.55,3.26) {receiver\\samples state};
\draw[black!60,line width=0.75pt] (5.55,2.30)--(5.55,1.30);
\draw[black!60,line width=0.75pt] (5.55,2.30)--(5.32,2.63);
\draw[black!60,line width=0.75pt] (5.55,2.30)--(5.78,2.63);
\draw[black!60,line width=0.75pt] (5.32,2.63)--(5.78,2.63);

\draw[fill=blue!9,draw=blue!45!black,line width=0.7pt] (2.50,1.10) rectangle (3.90,2.40);
\foreach \x in {2.78,3.06,3.34,3.62}{\draw[blue!35] (\x,1.10)--(\x,2.40);}
\foreach \y in {1.36,1.62,1.88,2.14}{\draw[blue!35] (2.50,\y)--(3.90,\y);}
\node[font=\scriptsize,align=center] at (3.20,0.66) {programmable aperture\\problem configuration $\Phi_P$};
\draw[flow] (pctrl.south)--(3.20,2.45);
\draw[flow] (1.28,1.76)--(2.41,1.76);
\draw[flow] (3.99,1.76)--(5.25,1.76);
\foreach \yy in {1.50,1.76,2.02}{\draw[blue!35,line width=0.55pt] (1.53,\yy) .. controls (1.83,\yy+0.18) and (2.08,\yy-0.18) .. (2.33,\yy);}
\foreach \yy in {1.50,1.76,2.02}{\draw[blue!35,line width=0.55pt] (4.07,\yy) .. controls (4.40,\yy+0.18) and (4.72,\yy-0.18) .. (5.05,\yy);}

\node[softgreen,minimum width=1.48cm] (mid) at (7.10,0.96) {$\tilde z^{(\ell)}$\\feedback /\\state};
\draw[feedback] (5.55,1.24) -- (5.55,0.96) -- (mid.west);

\node[font=\scriptsize,align=center] at (8.25,3.26) {transmitter\\re-encodes state};
\draw[black!60,line width=0.75pt] (8.25,2.30)--(8.25,1.30);
\draw[black!60,line width=0.75pt] (8.25,2.30)--(8.02,2.63);
\draw[black!60,line width=0.75pt] (8.25,2.30)--(8.48,2.63);
\draw[black!60,line width=0.75pt] (8.02,2.63)--(8.48,2.63);
\draw[feedback] (mid.east) -- (8.02,0.96);

\node[font=\scriptsize,align=center] at (12.85,3.26) {receiver\\produces $z^{(\ell+1)}$};
\draw[black!60,line width=0.75pt] (12.85,2.30)--(12.85,1.30);
\draw[black!60,line width=0.75pt] (12.85,2.30)--(12.62,2.63);
\draw[black!60,line width=0.75pt] (12.85,2.30)--(13.08,2.63);
\draw[black!60,line width=0.75pt] (12.62,2.63)--(13.08,2.63);

\draw[fill=orange!10,draw=orange!50!black,line width=0.7pt] (10.00,1.10) rectangle (11.40,2.40);
\foreach \x in {10.28,10.56,10.84,11.12}{\draw[orange!42] (\x,1.10)--(\x,2.40);}
\foreach \y in {1.36,1.62,1.88,2.14}{\draw[orange!42] (10.00,\y)--(11.40,\y);}
\node[font=\scriptsize,align=center] at (10.70,0.66) {same RF path, reprogrammed\\mixer configuration $\Phi_M$};
\draw[flow] (mctrl.south)--(10.70,2.45);
\draw[flow] (8.58,1.76)--(9.91,1.76);
\draw[flow] (11.49,1.76)--(12.55,1.76);
\foreach \yy in {1.50,1.76,2.02}{\draw[blue!35,line width=0.55pt] (8.83,\yy) .. controls (9.13,\yy+0.18) and (9.46,\yy-0.18) .. (9.83,\yy);}
\foreach \yy in {1.50,1.76,2.02}{\draw[blue!35,line width=0.55pt] (11.57,\yy) .. controls (11.90,\yy+0.18) and (12.22,\yy-0.18) .. (12.35,\yy);}

\node[box,fill=gray!3,font=\scriptsize,minimum width=1.40cm] (next) at (14.15,1.76) {next pair\\$\ell\!+\!1$};
\draw[feedback] (13.08,1.76)--(next.west);
\coordinate (loopstart) at (13.95,0.38);
\coordinate (loopend) at (6.35,0.38);
\draw[feedback,dashed] (next.south) |- (loopstart) -- (loopend);
\node[font=\scriptsize,anchor=east,text=teal!55!black] at (6.20,0.38) {repeat stage sequence};

\node[anchor=west,font=\bfseries] at (0,-0.30) {(b) Zero-initialized Transformer insertion};
\draw[panel] (-0.15,-3.25) rectangle (15.45,-0.48);

\node[box,minimum width=1.15cm] (h) at (0.90,-1.80) {hidden\\state $h$};
\node[box,minimum width=1.10cm] (ln) at (2.35,-1.80) {pre-\\norm};
\node[softgreen,minimum width=1.32cm] (win) at (4.00,-1.80) {$W_{\rm in}$\\radio state};
\node[softblue,minimum width=3.55cm,minimum height=1.05cm] (loop) at (7.10,-1.80) {conceptual RAOA loop above\\$\mathcal M_{\beta_\ell}\!\circ\!\mathcal P_{\gamma_\ell}$, repeated $L$ times};
\node[softgreen,minimum width=1.22cm] (wout) at (10.25,-1.80) {$W_{\rm out}$\\$\Delta h$};
\node[box,minimum width=1.40cm] (gate) at (12.10,-1.80) {residual scale\\$a$, $a_0=0$};
\node[circle,draw=black!45,fill=gray!3,minimum size=0.54cm,inner sep=0pt] (plus) at (13.60,-1.80) {$+$};
\node[box,minimum width=0.95cm] (hout) at (14.60,-1.80) {$h'$};

\draw[flow] (h.east) -- (ln.west);
\draw[flow] (ln.east) -- (win.west);
\draw[flow] (win.east) -- (loop.west);
\draw[flow] (loop.east) -- (wout.west);
\draw[flow] (wout.east) -- (gate.west);
\draw[flow] (gate.east) -- (plus.west);
\draw[flow] (plus.east) -- (hout.west);
\draw[flow] (h.south) |- (13.60,-2.78) -- (plus.south);

\end{tikzpicture}}
\caption{\textbf{Conceptual RAOA realization and Transformer insertion.} (a) One stage alternates problem and mixer configurations of the same programmable path; receiver state is fed back, so $L$ stages correspond to $2L$ modeled RF passes. (b) A pretrained Transformer projects hidden state into the RAOA state, returns it to model width, and adds it through a zero-initialized residual scale, making the insertion exactly behavior-preserving at initialization. The physical loop is conceptual; all foundation-model experiments execute the differentiable operator in software.}
\label{fig:adapter}
\end{figure}

We use two experimental regimes for deliberately different claims. First, RAOA acts as a classical alternating-operator optimizer on exact QUBO/HUBO objectives: known optima let us vary modeled pass depth without changing learned-control count, and independent random instances plus cross-depth reuse of the same two-control program test whether repeated execution itself adds capability. A constrained passive phase-only free-space model then asks whether useful behavior survives explicit propagation. Pretrained language models test a separate \emph{transfer} question: can a zero-initialized RAOA stage adapt realistic frozen checkpoints while retaining digital-adapter quality? Large-model gains are not depth evidence: the primary 4B comparison fixes depth.

RAOA does not claim wave-domain inference, nonlinear computation from programmable scattering, iterative wave computation, or parameter-independent physical depth as new in isolation. Prior systems already demonstrate recurrent and nonlinear wave computation~\citep{hughes2019wave,wanjura2024nonlinear,xia2024recurrent,yildirim2024nonlinear}, closed-loop programmable RF computation~\citep{tzarouchis2025programmable}, experimentally trained metasurface computational environments with in situ electromagnetic feedback~\citep{song2026computationalspace}, wireless/metasurface neural layers~\citep{stylianopoulos2025ota,huang2025simsemcom,an2026wave}, and shared-weight programmable-metasurface depth~\citep{hammami2026expressivity}. The contribution here is the specific \emph{temporal problem/mixer alternation} over a recurrent state, together with controlled tests on exact polynomial objectives and zero-initialized insertion into pretrained Transformers.

Our contributions are:
\begin{itemize}
    \item \textbf{A task-structured recurrent operator.} RAOA alternates energy-derived problem and mixing passes over a persistent latent state. A common local-field formulation covers learned quadratic/context operators and exact QUBO/HUBO objectives, while tied controls isolate pass depth from parameter growth.
    \item \textbf{Alternating-operator depth gains across independent instances.} Across independently drawn $n=8,12$ instances, each instance fixes its own objective coefficients and learns only two shared scalar controls $(\gamma,\beta)$. Simply extending that instance-specific two-control program from two to 4, 8, and 16 passes increases mean success for all four tested objective families. Matched-depth optimization strongly benefits 3-SAT, Max-Cut, and hypergraph Max-Cut but is inconsistent on partition, revealing optimization stability rather than parameter count as the main caveat. Native higher-order fields also avoid the fixed-budget ancilla penalty of the tested exact quadratizations.
    \item \textbf{A constrained propagation bridge and pretrained-model transfer.} Passive phase-only free-space compilation improves systematically with panel count and retains meaningful Max-Cut behavior despite nonzero operator error. Separately, single-stage RAOA adapters recover 93.9--99.2\% of the shallow MLP WikiText NLL improvement across three Transformer families under matched validation budgets, while GSM8K transfer remains model-dependent. All radio-propagation results are simulated.
\end{itemize}

\section{Related Work}\label{sec:related}

\paragraph{Wave and metasurface computation.}
Metamaterial analog computing maps operators such as differentiation and integration into engineered propagation~\citep{silva2014metamaterials,zangeneh2021analogue}. Coding and programmable metasurfaces add electronic reconfiguration~\citep{cui2014coding}, while RIS work traditionally optimizes communication and sensing channels~\citep{wu2019joint,wu2020smart,liu2021ris}. More recent systems move closer to computation itself: \citet{tzarouchis2025programmable} demonstrate a programmable RF wave machine for open- and closed-loop linear algebra and iterative algorithms, while \citet{song2026computationalspace} experimentally train a four-metasurface computational environment from in situ electromagnetic feedback without a differentiable physical model. RAOA therefore does not claim iterative wave computation or metasurface-based physical learning in general; it studies a learned recurrent \emph{alternation} between problem and mixer passes, with explicit exact-objective semantics and pass-depth controls.

\paragraph{Physical neural networks, nonlinearity, and depth.}
Diffractive and coherent photonic networks and broader physical neural networks implement learned transformations directly in propagation~\citep{shen2017nanophotonic,lin2018d2nn,wright2022physical,li2024mechanical,momeni2025training}. Nonlinear computation can also emerge from nominally linear scattering through structural input encoding, repeated modulation, multiple scattering, or recurrence~\citep{hughes2019wave,wanjura2024nonlinear,xia2024recurrent,yildirim2024nonlinear}. Most directly, \citet{hammami2026expressivity} separate encoding nonlinearity, structural nonlinearity, and programmable-metasurface WPNN depth and show that shared-weight depth can improve expressivity. Accordingly, our novelty is not ``more wave depth without more trainable weights'' by itself. Their study varies structural encoding, coupling, and layer depth; RAOA instead studies recurrent feedback over one state, problem/mixer semantics, exact QUBO/HUBO fields, cross-depth control reuse, and zero-initialized Transformer insertion.

\paragraph{Stacked intelligent metasurfaces and over-the-air neural inference.}
Stacked intelligent metasurfaces (SIMs) realize cascaded trainable electromagnetic layers for semantic inference and wave-domain processing~\citep{huang2025simsemcom,liu2025sim,an2026wave,stylianopoulos2026minn}. Metasurfaces-Integrated Neural Networks further optimize RIS/SIM-enabled wireless propagation as DNN layers for end-to-end edge inference~\citep{stylianopoulos2025ota}. These results establish wave-domain neural inference in wireless systems. The RAOA experiments study a complementary axis: repeated temporal feedback reuse of one latent state through alternating pass types, including exact optimization and foundation-model adaptation.

\paragraph{Structured iteration and parameter-efficient adaptation.}
RAOA relates to algorithm unrolling~\citep{gregor2010lista,monga2021unrolling}, neural operators~\citep{kovachki2023neuraloperator,li2021fno}, and alternating-operator constructions such as QAOA~\citep{farhi2014qaoa}; the QAOA link is architectural only, since RAOA is classical, real-valued, and trained by backpropagation. For foundation-model adaptation, LoRA~\citep{hu2022lora} is the low-rank baseline and a shallow MLP the unconstrained nonlinear reference. We ask how much digital-adapter quality a physically structured alternating operator retains, not whether it universally dominates (Appendix~\ref{app:related}).

\section{Radio Alternating Operator Ansatz}\label{sec:method}

\subsection{From programmable propagation to iterative computation}
A fixed programmed path $y=H\Phi x$ is linear in the transmitted state. Signal-conditioned programming $\Phi(x)$ can make the end-to-end map nonlinear even when each instantaneous propagation step is linear~\citep{wanjura2024nonlinear,yildirim2024nonlinear}; receiver feedback then adds recurrent state. RAOA gives this recurrence a specific structure: each stage applies a \emph{problem pass} followed by a \emph{mixer pass}, and the pair can be repeated while reusing the same controls.

\begin{figure*}[t]
\centering
\resizebox{0.98\textwidth}{!}{\begin{tikzpicture}[font=\scriptsize,>=Latex,node distance=0.46cm]
\tikzset{
  stage/.style={draw=black!35,rounded corners=3pt,align=center,minimum width=3.65cm,minimum height=1.58cm,inner sep=5pt,line width=0.7pt},
  arr/.style={-{Latex[length=2mm]},line width=0.8pt,draw=black!55}
}
\node[stage,fill=blue!6] (static) {\textbf{Static surface}\\$y=H\Phi x$\\\footnotesize fixed linear map\\\footnotesize cubic MSE $2.37\!\times\!10^{-2}$};
\node[stage,fill=green!7,right=of static] (cond) {\textbf{Signal-conditioned}\\$y=H\Phi(x)x$\\\footnotesize input-dependent map\\\footnotesize cubic MSE $\approx0$ ideal; $2.4\!\times\!10^{-3}$ severe};
\node[stage,fill=orange!8,right=of cond] (fb) {\textbf{Feedback state}\\$z_{t+1}=F(z_t,x_t)$\\\footnotesize memory / recurrence\\\footnotesize delayed product: $49.3\%$ linear $\rightarrow100\%$ nonlinear};
\node[stage,fill=violet!7,right=of fb] (raoa) {\textbf{RAOA alternation}\\$\mathcal M_{\beta}\!\circ\!\mathcal P_{\gamma}$, repeated\\\footnotesize re-evaluate field after mixing\\\footnotesize $L$ stages $=2L$ modeled RF passes};
\draw[arr] (static) -- node[above=3pt,font=\scriptsize,fill=white,inner sep=1pt]{condition} (cond);
\draw[arr] (cond) -- node[above=3pt,font=\scriptsize,fill=white,inner sep=1pt]{feed back} (fb);
\draw[arr] (fb) -- node[above=3pt,font=\scriptsize,fill=white,inner sep=1pt]{alternate} (raoa);
\end{tikzpicture}}
\caption{\textbf{Operator ladder motivating RAOA.} A fixed programmed path is linear in its transmitted state; signal-conditioned programming can create nonlinear input-output maps; feedback adds memory; RAOA repeatedly alternates problem and mixer roles over that recurrent state. The quantitative annotations are controlled primitive demonstrations, not application benchmarks or hardware measurements. ``Severe'' denotes the preregistered synthetic impairment profile.}
\label{fig:ladder}
\end{figure*}

Figure~\ref{fig:ladder} makes the progression explicit. Our question is whether composing state-dependent problem updates with recurrent mixing makes repeated programmable passes useful as computational depth, not whether any individual ingredient is new.

\subsection{RAOA update}
Let $z^{(\ell)}\in\mathbb{R}^{m}$ be the recurrent state, $c$ a context, and $E_\psi(z;c)$ a differentiable problem energy. One stage (two modeled RF passes) is
\begin{align}
 u^{(\ell)}&=\nabla_zE_\psi(z^{(\ell)};c),\quad s^{(\ell)}=\operatorname{mean}_i|u_i^{(\ell)}|+\epsilon,\nonumber\\[-1mm]
 \tilde z^{(\ell)}&=\tanh\!\left(z^{(\ell)}-\sin\gamma_\ell\,u^{(\ell)}/s^{(\ell)}\right),\quad
 z^{(\ell+1)}=\tanh\!\left(\cos\beta_\ell\,\tilde z^{(\ell)}+\sin\beta_\ell\,M\tilde z^{(\ell)}\right).
 \label{eq:raoa_compact}
\end{align}
Here $M$ is a fixed Hadamard-derived orthogonal mixer. Crucially, the next problem pass recomputes $\nabla E_\psi$ on the \emph{new} state. Tying $\gamma_\ell\equiv\gamma$ and $\beta_\ell\equiv\beta$ increases execution depth without adding to the two learned scalar controls. In exact optimization, the coefficients of $E_{\rm exact}$ are fixed by the instance; Sec.~\ref{sec:results_controls} tests whether the added composition helps.

The same update covers both study regimes:
\begin{equation}
 E_{\rm FM}(z;c)=\tfrac12 z^\top Jz+(W_{\rm in}c)^\top z,
 \qquad
 E_{\rm exact}(z)=\sum_{S}w_S\!\prod_{i\in S}z_i.
 \label{eq:energy_specializations}
\end{equation}
The first gives the learned quadratic field used in Transformer adapters; the second directly represents a known QUBO or HUBO objective. In the discrete setting, RAOA is a \emph{classical alternating-operator optimizer}: a problem-derived update and a mixer update repeat, architecturally paralleling QAOA and the broader alternating-operator ansatz~\citep{farhi2014qaoa,hadfield2019alternating}. The analogy stops there---RAOA uses a real recurrent state, nonlinear classical updates, and ordinary backpropagation, with no quantum-equivalence or quantum-advantage claim. Appendix~\ref{app:method} gives the full local-field form, normalization invariance, parameterization, and implementation details. Native higher-order fields are phenomenological software operators, not evidence of RF many-body realization.

\paragraph{Why tied depth can add computation.}
A compact surrogate makes the role of repeated execution explicit. In the unsaturated quadratic case, we freeze the positive scalar normalization within the trajectory and define $B_\beta=\cos\beta I+\sin\beta M$, $\eta=\sin\gamma/s$, $A=B_\beta(I-\eta J)$, and $c=-\eta B_\beta b$. With tied controls,
\begin{equation}
 \bar z^{(L)}=A^L\bar z^{(0)}+\sum_{k=0}^{L-1}A^k c.
 \label{eq:raoa_affine_power}
\end{equation}
Thus additional passes expose higher powers of the same stage operator without adding learned controls, giving a block-Krylov-style unrolled computation~\citep{saad2003iterative,monga2021unrolling}. Appendix~\ref{app:theory} proves this under the stated surrogate assumptions and adds three results: dynamic polynomial-degree growth versus a frozen field, local descent of the problem substep under the standard smoothness condition, and a finite-depth approximation-error bound for the \texttt{tanh}-bounded recurrence. These explain how depth \emph{can} help, not that it must.

\subsection{Conceptual radio realization}
Figure~\ref{fig:adapter} interprets one stage physically: a transmitter encodes the state, a programmable aperture executes the problem pass, receiver output is fed back, and the path is reprogrammed for the mixer. Repeating the stage yields $2L$ modeled RF passes. The matrices in Eq.~\ref{eq:raoa_compact} are effective receiver-plane operators, not literal surface coefficients; Sec.~\ref{sec:results_structure} tests pairwise problem/mixer operators under explicit passive phase-only propagation.

\subsection{Zero-initialized Transformer insertion}
For hidden state $h$, RAOA uses $c=\operatorname{LN}(h)$ and inserts a residual adapter
\begin{equation}
 z^{(0)}=\tanh(W_{\rm in}c),\qquad
 \Delta h=W_{\rm out}z^{(L)},\qquad
 f_{\rm adapt}(h)=f_{\rm base}(h)+a\,\Delta h,\quad a_0=0.
 \label{eq:adapter_compact}
\end{equation}
The zero-initialized residual scale makes insertion exactly behavior-preserving at initialization; adaptation can subsequently change model behavior. The primary 4B experiments use $L=1$; tied controls are used in the exact depth studies to isolate repeated execution from parameter growth. Exact parameter counts and bias conventions are deferred to Appendix~\ref{app:method}.

\section{Experimental Design}\label{sec:experiments}

We test nonlinear and recurrent ingredients, fixed-control depth, zero-initialized pretrained-model transfer, and survival under constrained propagation; all reported quantitative values are generated from frozen JSON/CSV artifacts.

\paragraph{Exact discrete optimization.}
We use RAOA as a classical alternating-operator solver on four objective families spanning pairwise QUBOs and native higher-order HUBOs:
\begin{center}
\small
\setlength{\tabcolsep}{4.2pt}
\begin{tabular}{@{}llll@{}}
\toprule
Objective & Structure & Order & Role in study\\
\midrule
Max-Cut & weighted graph cut & 2 & canonical pairwise objective\\
Partition & squared imbalance & 2 & optimization counterexample\\
3-SAT & unsatisfied-clause penalty & 3 & native higher-order objective\\
Hypergraph Max-Cut & monochromatic-edge penalty & 3--4 & native higher-order objective\\
\bottomrule
\end{tabular}
\end{center}
All optima are enumerated. The sweep uses 12 independent instances per family at $n\in\{8,12\}$, $L\in\{1,2,4,8\}$, and five seeds. One $(\gamma,\beta)$ pair is tied across stages, so every depth has two learned controls; each pair is also executed at every other depth without reoptimization. Cross-depth details are in Appendix~\ref{app:random_depth}; exact encodings and controls in Appendix~\ref{app:discrete_optimization}.

\paragraph{Constrained propagation.}
For pairwise Max-Cut and partition at $n=8$, we compile problem and mixer operators through 1, 2, or 4 coherent passive phase-only free-space panels (128 elements each), with no trainable amplitude control, no dense digital precoder or combiner, and no direct path. We measure operator error and downstream success both with ideal controls reused and after reoptimizing only the same two controls. Appendix~\ref{app:phase_only} gives geometry and compiler details.

\paragraph{Foundation models.}
We evaluate Qwen3-4B, Gemma-3-4B-it, and Phi-4-mini-instruct~\citep{yang2025qwen3,gemma2025technical,microsoft2025phi4mini}. Primary RAOA adapters use state width $m=64$, $L=1$, and all text MLP blocks, testing architectural transfer rather than large-model depth scaling. WikiText~\citep{merity2016wikitext} uses disjoint training and validation data followed by a three-seed full-test confirmation. GSM8K~\citep{cobbe2021gsm8k} selects schedules on a training-split holdout before all 1,319 official test problems. After observing material 384-token cap-hit rates, a preregistered audit re-evaluated the frozen Qwen and Gemma schedules at 768 tokens without retuning and stored per-example predictions for a paired bootstrap analysis.

\paragraph{Baselines and controls.}
We compare frozen, LoRA, RAOA, and a shallow bottleneck MLP under equal family-specific validation-search budgets and closely matched trainable parameter counts. Appendix~\ref{app:experiments} gives placement, the Qwen depth diagnostic, schedules, generation diagnostics, and parameter counts; Sec.~\ref{app:hardware} gives the synthetic perturbation study.

\section{Results}\label{sec:results}

\subsection{Nonlinear and recurrent computational primitives}\label{sec:results_hierarchy}
Figure~\ref{fig:ladder} combines the operator progression with two controlled primitive demonstrations. A fixed scalar linear surface cannot represent $x\mapsto x^3$: its best-fit MSE is $2.37\times10^{-2}$, while signal-conditioned control reaches numerically zero error in the ideal model and $2.4\times10^{-3}$ under the severe synthetic profile. Linear feedback is likewise near chance on delayed multiplication ($49.3\%$), while nonlinear feedback reaches $100\%$ ideally. These minimal tests isolate the input-dependent and recurrent nonlinear primitives that RAOA composes rather than benchmark application performance.

\subsection{Discrete optimization with alternating radio passes}\label{sec:results_controls}
Here RAOA is evaluated as an alternating-operator optimizer, not a functionality check. Figure~\ref{fig:discrete_main} isolates execution depth from parameter growth and depth-specific reoptimization. For each random instance, we fix its objective coefficients, optimize two tied scalar controls $(\gamma,\beta)$ at $L=1$ (two passes), then reuse them unchanged at 4, 8, and 16 passes. This is replication across independent instances, not cross-instance control transfer. Mean success rises monotonically for every family-size group. At $n=12$, 3-SAT increases from $0.367$ to $0.980$, hypergraph Max-Cut from $0.441$ to $0.984$, Max-Cut from $0.0158$ to $0.175$, and partition from $0.0304$ to $0.0594$; the corresponding $n=8$ endpoints are $0.543\rightarrow0.997$, $0.586\rightarrow0.995$, $0.0785\rightarrow0.390$, and $0.0356\rightarrow0.0733$. Thus useful recurrent computation can arise from simply running the same learned two-control program for more passes.

\begin{figure}[H]
\centering
\includegraphics[width=0.97\linewidth]{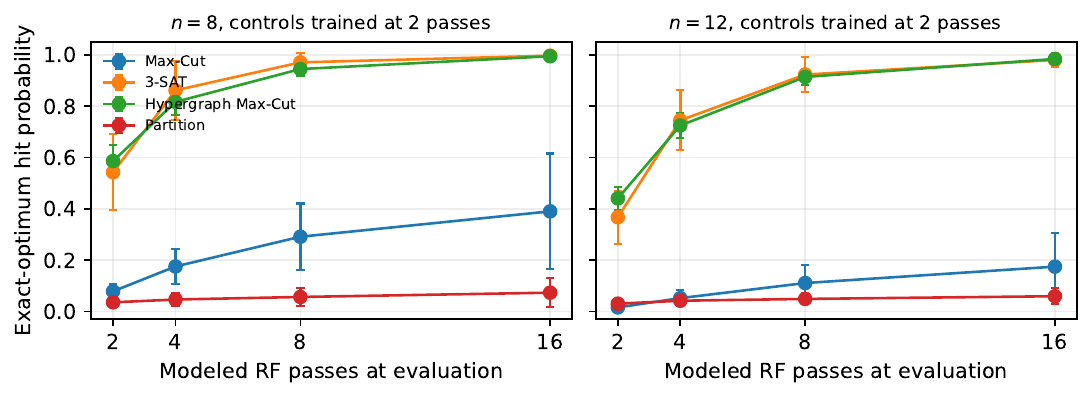}
\caption{\textbf{Cross-depth reuse isolates recurrent execution depth.} Twelve independent instances per family and size; error bars are standard deviations across instances after averaging five optimizer seeds within each instance. The same $(\gamma,\beta)$ controls are optimized only at two passes and then reused unchanged at deeper execution depths. Mean exact-optimum hit probability increases monotonically in all eight family-size groups.}
\label{fig:discrete_main}
\end{figure}

Matched-depth optimization gives a complementary view. All 12 3-SAT instances improve at every deeper setting for both $n=8$ and $n=12$. At 16 passes, hypergraph Max-Cut improves over the two-pass baseline on 11/12 and 12/12 instances, while Max-Cut improves on 10/12 and 12/12. Partition is the optimization counterexample: at $n=8$, only 2/12 instances improve and 10/12 worsen at 16 passes; at $n=12$, the split is 5/12 versus 7/12. Because shallow-trained partition controls still improve on average when merely executed deeper, the earlier partition failure is better localized to matched-depth optimization instability rather than absence of useful recurrent execution depth. An untied-control sweep likewise shows no systematic advantage from growing the control count: the four-problem mean is $0.570$ tied versus $0.569$ untied at eight passes (2 versus 8 controls) and $0.566$ versus $0.532$ at sixteen passes (2 versus 16 controls; Appendix~\ref{app:random_depth}).

A preregistered mechanism ablation further localizes recurrence. At eight passes, freezing the initial local field drops success from $0.985$ to $0.483$ on 3-SAT, $0.850$ to $0.040$ on hypergraph Max-Cut, and $0.430$ to $0.060$ on Max-Cut. Mixer ablations are objective-dependent: nearly neutral on 3-SAT and hypergraph Max-Cut, about $0.11$ worse on Max-Cut, and better on the low partition baseline. Thus state-dependent field re-evaluation is the consistent mechanism; the Hadamard mixer is not universally necessary or superior (Appendix~\ref{app:mechanism_ablation}).

\paragraph{A structural interpretation of recurrent depth.}
Appendix~\ref{app:theory} formalizes why field re-evaluation can matter beyond repeated application of a fixed map. For a degree-$d$ polynomial energy, the unsaturated fixed-normalization surrogate permits
\begin{equation}
 \deg\!\left(G^{\circ L}\right)\le(d-1)^L,
 \qquad
 \deg\!\left(z^{(L)}_{\rm frozen}\right)\le d-1,
 \label{eq:main_degree_growth}
\end{equation}
where the second bound freezes the initial local field for all passes. The full normalized, \texttt{tanh}-saturated RAOA is not polynomial, so Eq.~\ref{eq:main_degree_growth} is a structural surrogate rather than an exact description of the implemented map. Together with the ablation above, it identifies dynamic field re-evaluation as a mechanism by which higher-order objectives can expose additional compositional order with depth, while also explaining why merely repeating a frozen field cannot do so.

RAOA's direct higher-order problem layer is also useful for alternating-operator optimization beyond QUBOs. Under the same eight-pass, two-control budget, exact quadratization introduces 8--9 ancillas and sharply lowers optimum-hit probability:
\begin{table}[H]
\centering
\small
\setlength{\tabcolsep}{4.5pt}
\caption{\textbf{Native higher-order execution versus exact quadratization at eight modeled passes.} Values are mean $\pm$ seed standard deviation over five seeds.}
\label{tab:native_main}
\begin{tabular}{@{}lrrr@{}}
\toprule
Problem / representation & Executed vars. & Ancillas & Success\\
\midrule
3-SAT, native HUBO & 8 & 0 & $0.989\pm0.003$\\
3-SAT, quadratized QUBO & 17 & 9 & $0.161\pm0.023$\\
Hypergraph Max-Cut, native HUBO & 8 & 0 & $0.830\pm0.008$\\
Hypergraph Max-Cut, quadratized QUBO & 16 & 8 & $0.119\pm0.011$\\
\bottomrule
\end{tabular}
\end{table}
This fixed-budget result does not establish universal HUBO superiority; it shows the ancilla burden of the tested exact reductions when higher-order local fields are available natively. Appendix~\ref{app:discrete_optimization} reports complete pass sweeps and encodings. Coefficient-RMS normalization also leaves 15 of 16 Max-Cut and partition depth comparisons exactly unchanged, so simple global objective scale does not explain the optimization instability.

\subsection{Foundation-model transfer}\label{sec:results_foundation}
Table~\ref{tab:foundation_main} summarizes the full-test foundation-model evidence. On WikiText, RAOA recovers 96.0\%, 99.2\%, and 93.9\% of the shallow MLP NLL improvement over frozen Qwen, Gemma, and Phi. It remains close to the generic nonlinear reference without being the universal winner: LoRA leads on Qwen, MLP on Gemma and Phi.

Trainable budgets are tightly matched. Across Qwen, Gemma, and Phi, RAOA uses 12.04/11.37/12.81M parameters versus 11.98/11.32/12.78M for LoRA and 12.08/11.40/12.88M for MLP, staying within 0.6\% of each digital reference. Full-test WikiText evaluations include 296.5k, 289.9k, and 284.7k scored next-token positions, respectively (Appendix~\ref{app:experiments}).

A validation-selected, parameter-matched Qwen depth-2 digital recurrent bottleneck (12,038,436 parameters versus RAOA's 12,038,544) reaches full-test NLL $2.6727\pm0.0029$, between MLP ($2.6596\pm0.0023$) and RAOA ($2.6851\pm0.0040$). Thus WikiText establishes effective RAOA transfer, not a unique advantage over generic recurrence (Appendix~\ref{app:recurrent_baseline}).

\begin{table}[H]
\centering
\small
\setlength{\tabcolsep}{3.2pt}
\caption{\textbf{Foundation-model results.} WikiText NLL (lower is better) and GSM8K EM in percent (higher is better). Qwen and Gemma use the frozen-schedule 768-token sensitivity audit; Phi retains the canonical 384-token evaluation. Trainable entries are mean $\pm$ seed standard deviation; bold marks the best mean, not significance.}
\label{tab:foundation_main}
\begin{tabular}{@{}llrrrr@{}}
\toprule
Model & Metric & Frozen & LoRA & RAOA ($L{=}1$) & MLP\\
\midrule
Qwen3-4B & WikiText NLL & 3.3007 & $\mathbf{2.6426}\!\pm\!.0016$ & $2.6851\!\pm\!.0040$ & $2.6596\!\pm\!.0023$\\
           & GSM8K EM (768) & 87.95 & $81.65\!\pm\!1.96$ & $87.95\!\pm\!.49$ & $\mathbf{88.05}\!\pm\!.25$\\
Gemma-3-4B-it & WikiText NLL & 5.9833 & $2.9046\!\pm\!.0051$ & $2.9009\!\pm\!.0055$ & $\mathbf{2.8763}\!\pm\!.0061$\\
           & GSM8K EM (768) & 81.12 & $79.83\!\pm\!.98$ & $\mathbf{81.43}\!\pm\!.27$ & $81.25\!\pm\!.29$\\
Phi-4-mini & WikiText NLL & 3.1010 & $2.6232\!\pm\!.0006$ & $2.6503\!\pm\!.0058$ & $\mathbf{2.6210}\!\pm\!.0023$\\
           & GSM8K EM (384) & 79.23 & $83.04\!\pm\!.23$ & $79.45\!\pm\!.39$ & $\mathbf{83.73}\!\pm\!.53$\\
\bottomrule
\end{tabular}
\end{table}

The canonical Qwen and Gemma cap-hit rates were 10.7--16.3\%; the preregistered 768-token audit raises EM by 5.1--8.0 points while reducing cap hits to 0.4--4.4\%. Paired RAOA$-$Frozen differences are $0.00$ points on Qwen (95\% CI $[-0.78,0.78]$) and $+0.30$ on Gemma ($[-0.53,1.16]$); RAOA$-$MLP is $-0.10$ ($[-0.66,0.43]$) and $+0.18$ ($[-0.45,0.81]$). Thus the audit does not distinguish RAOA, MLP, and frozen. Qwen LoRA remains $6.29$ points below frozen ($[-8.11,-4.50]$) despite lowering teacher-forced NLL from $2.043$ to $0.593$. Intervals condition on the completed seed ensemble. Phi was not rerun because adapted 384-token cap-hit rates are mostly below 2\%. Qwen validation NLL remains flat across $L=1,2,4$; Appendix~\ref{app:experiments} reports the canonical table and audit.

\begin{figure*}[t]
\centering
\begin{minipage}[t]{0.49\textwidth}
\centering
\includegraphics[width=\linewidth]{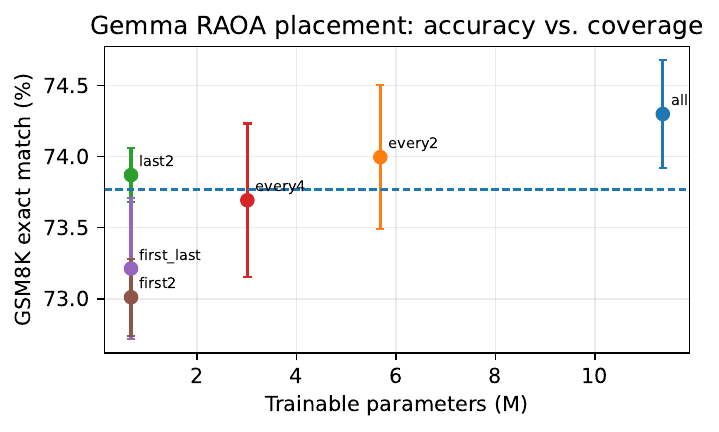}
\end{minipage}\hfill
\begin{minipage}[t]{0.49\textwidth}
\centering
\includegraphics[width=\linewidth]{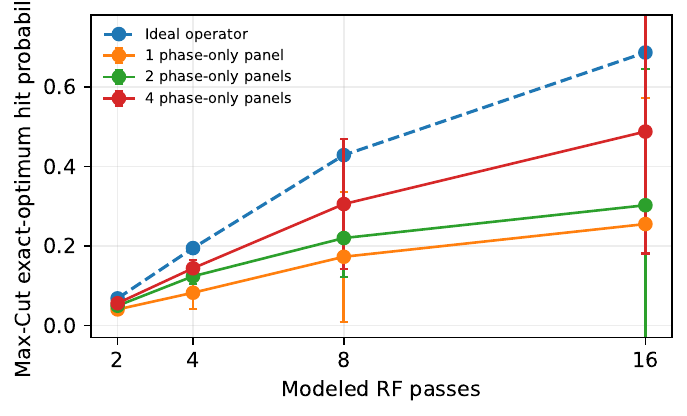}
\end{minipage}
\caption{\textbf{Structural and realizability controls.} \textbf{Left:} full-test Gemma GSM8K accuracy versus RAOA trainable parameters; every-second-layer placement nearly matches dense placement with half the adapters. \textbf{Right:} Max-Cut success after compiling problem/mixer operators into 1, 2, or 4 passive 128-element phase-only panels (coherent parallel single-bounce paths) and reoptimizing only the same two scalar controls. More panel resources reduce the gap to the ideal effective operators, although substantial variability across geometry seeds remains at large depth.}
\label{fig:structure_robustness}
\end{figure*}

\subsection{Structural efficiency and constrained propagation}\label{sec:results_structure}
Figure~\ref{fig:structure_robustness} summarizes two controls with direct system implications. The full Gemma placement sweep rejects the pilot-only early-layer pattern: dense placement reaches $74.30\%$, while every-second-layer placement reaches $74.00\%$ with 17 rather than 34 adapted blocks (5.68M versus 11.37M parameters). Extreme two-layer sparsity returns toward frozen performance, supporting a coverage--efficiency trade-off rather than special early-layer sensitivity.

The right panel tests constrained physical realizability for pairwise objectives. We compile effective problem/mixer operators into passive unit-amplitude free-space panels with no direct path or trainable dense digital maps; the panels contribute as coherent, independent single-bounce paths. Table~\ref{tab:phase_operator_main} shows error falling with 1, 2, and 4 panels. Four panels use 512 programmable phases for a 9-mode target, so this tests feasibility, not efficiency.

\begin{table}[H]
\centering
\small
\setlength{\tabcolsep}{4.1pt}
\caption{\textbf{Phase-only compilation.} Normalized Frobenius error for problem ($P$) and mixer ($M$) operators (mean $\pm$ standard deviation over three geometries); 128 phases per panel and a shared mixer target.}
\label{tab:phase_operator_main}
\begin{tabular}{rrrrr}
\toprule
Panels & Phases & Max-Cut $P$ & Partition $P$ & Mixer $M$\\
\midrule
1 & 128 & $0.687\!\pm\!.016$ & $0.711\!\pm\!.034$ & $0.758\!\pm\!.048$\\
2 & 256 & $0.519\!\pm\!.021$ & $0.535\!\pm\!.059$ & $0.573\!\pm\!.055$\\
4 & 512 & $0.357\!\pm\!.092$ & $0.348\!\pm\!.142$ & $0.406\!\pm\!.092$\\
\bottomrule
\end{tabular}
\end{table}

For Max-Cut, after four-panel compilation, reoptimizing only $(\gamma,\beta)$ gives success probabilities $0.305$ and $0.487$ versus ideal values $0.428$ and $0.686$ at 8 and 16 passes (about 71\% retention); reusing ideal controls gives $0.301$ and $0.410$. Partition's matched-depth ideal baseline is non-monotonic, so we omit a retention ratio. This is constrained simulation, not RF measurement, and does not realize the 64-dimensional foundation adapters or native higher-order fields. Appendix~\ref{app:ris_model} specifies the channel, geometry, and physical assumptions; Appendix~\ref{app:phase_only} gives full operator and downstream details.\par\pagebreak[3]\noindent More generally, Appendix~\ref{app:theory} gives a finite-depth realization bound: if a compiled stage differs uniformly from the ideal by at most $\delta$ on the bounded RAOA state cube and the ideal stage is $K$-Lipschitz, then
\begin{equation}
 \|\widehat z^{(L)}-z^{(L)}\|\le\delta\sum_{k=0}^{L-1}K^k.
 \label{eq:main_compiled_error}
\end{equation}
This is not a contraction claim; it motivates evaluating both pass-level operator error and downstream computation as depth grows, rather than treating matrix approximation alone as a hardware proxy.

Separate synthetic angle/state perturbations remain nearly invariant through the preregistered severe profile: ideal-trained Qwen WikiText NLL changes by only $+0.00051$ ($\approx0.019\%$), while domain-randomized training is slightly worse throughout. We retain this as a software robustness result under modeled perturbations in Appendix~\ref{app:hardware}, not as RF measurement evidence.

\section{Discussion}\label{sec:discussion}

\paragraph{What the evidence establishes.}
Across independent exact instances, deeper execution with the same two learned controls often improves success, while partition separates useful execution depth from successful depth-specific optimization. State-dependent field re-evaluation is the consistent mechanism; mixer benefits are objective-dependent. Passive phase-only compilation provides an imperfect propagation bridge. Together these results support recurrent execution depth as a computational resource on the tested exact objectives, while the single-stage foundation-model study establishes neural transfer rather than language-model depth scaling. Appendix~\ref{app:theory} gives the structural analysis.

\paragraph{Scope.}
The evidence remains operator- and simulation-level: higher-order realization, calibrated hardware, and measured latency and energy remain open. The primary 4B study uses $L=1$; Qwen depth scaling is flat, and a matched depth-2 recurrent control reaches $2.6727\pm0.0029$ NLL, so transfer is not unique to RAOA recurrence. The 768-token GSM8K audit likewise fails to distinguish RAOA, MLP, and frozen under the paired bootstrap. Appendix~\ref{app:limitations} details limitations and negative results.

\section{Conclusion}\label{sec:conclusion}

RAOA casts programmable propagation as a classical alternating-operator recurrence over a persistent state. Across independent exact instances, performance can improve when the same two learned controls are executed for more modeled passes, while partition shows that optimization need not exploit available depth. Passive phase-only compilation provides a constrained pairwise propagation bridge, and a zero-initialized single-stage interface transfers to three pretrained Transformer families while recovering most of a shallow MLP's WikiText NLL gain. The supported claim is focused: recurrent pass depth is a computational resource on the tested objectives, with a constrained-propagation path and transferable neural interface. Measured closed-loop hardware, higher-order physical realization, and foundation-model depth scaling remain open.

\subsection*{AI use statement}
Generative AI (OpenAI ChatGPT) was used to assist with conceptual development and critique, research-software development and review, literature organization, manuscript drafting and editing, and scientific illustration. Figure~\ref{fig:ai_system_overview} is an AI-generated conceptual image; it is included only for intuition and is not experimental evidence. All quantitative plots were generated programmatically from experimental results. All experiments were conducted by the authors, primarily on an NVIDIA RTX~4090 GPU. The authors independently executed and inspected the reported experiments, checked the quantitative claims against the underlying results, and take responsibility for the final content of the submission.

\bibliography{references}
\bibliographystyle{iclr2027_conference}

\appendix
\section{Detailed Related Work}\label{app:related}

\subsection{Computational metamaterials and programmable electromagnetic environments}
Wave-based analog computing predates modern physical neural networks. Metamaterial implementations can encode spatial transfer functions directly in wave propagation, enabling operations such as differentiation and integration~\citep{silva2014metamaterials}. Reviews of computational metamaterials emphasize mapping a desired operator into a compact wave interaction~\citep{zangeneh2021analogue}. Coding and programmable metamaterials add a digital control plane to the medium~\citep{cui2014coding}. \citet{tzarouchis2025programmable} go further with a programmable RF wave machine that supports open-loop matrix operations, closed-loop matrix inversion, root finding, inverse design, and iterative optimization. This is an important precedent: repeated and closed-loop wave computation itself is not unique to RAOA. \citet{song2026computationalspace} raise the hardware bar further by experimentally surrounding a free-space region with four computational metasurfaces and optimizing their discrete phase states from directly measured electromagnetic feedback using a model-free forward-only procedure. Their contribution is in situ physical learning and environmental computation; RAOA instead contributes an explicit recurrent problem/mixer program, controlled tied-depth optimization, and a transferable neural interface.

RIS/IRS research develops a closely related hardware vocabulary---phase and amplitude control, cascaded propagation, calibration, and beamforming---but usually targets communication and sensing objectives~\citep{wu2019joint,wu2020smart,liu2021ris}. RAOA adopts an operator viewpoint and trains a specific recurrent alternation end to end. The distinction is not that programmable surfaces can compute, but how state, task-dependent local fields, and repeated pass types are organized and evaluated.

\subsection{Physical neural networks}
Coherent photonic meshes and diffractive optical neural networks showed that propagation can realize trained linear transformations and inference pipelines~\citep{shen2017nanophotonic,lin2018d2nn}. Deep physical neural networks broaden the substrate class and demonstrate hybrid physical/digital training across optical, electronic, and mechanical systems~\citep{wright2022physical,li2024mechanical}. The review by \citet{momeni2025training} organizes physical-neural-network training around the constraints and opportunities of the underlying physics.

Our work follows this hardware-aware learning philosophy but focuses on a radio-specific recurrent operator. The static/conditioned/feedback hierarchy isolates where input dependence and state enter; RAOA then composes those ingredients through alternating problem/mixer passes. This is an architectural contribution rather than a claim that physical neural networks or programmable propagation are new.

\subsection{Nonlinear computation from linear or weakly nonlinear wave physics}
Several recent results show that nonlinear input-output computation need not require a strongly nonlinear material. \citet{wanjura2024nonlinear} encode data in physical parameters of a linear scattering system; \citet{yildirim2024nonlinear} synthesize higher-order nonlinear transforms through repeated data modulation and multiple scattering; \citet{xia2024recurrent} obtain nonlinear optical encoding through recurrent linear scattering; and \citet{hughes2019wave} establish a mapping between wave dynamics and analog recurrent neural networks.

These works are directly relevant to RAOA. Our contribution is not the observation that a linear wave equation can participate in nonlinear computation. Instead, RAOA makes the recurrent state and the two pass roles explicit, then evaluates that structure on controlled exact objectives and pretrained language models.

\subsection{Programmable-metasurface expressivity and depth}
The closest recent expressivity study is \citet{hammami2026expressivity}. Using a physics-consistent D-band programmable-metasurface cavity, they separately vary encoding nonlinearity, structural nonlinearity, and WPNN depth. Importantly, their shared-weight architecture shows that increasing WPNN depth can improve expressivity without increasing trainable-weight count. We therefore do \emph{not} claim parameter-independent wave depth as a first or unique result.

The overlap also clarifies the distinction. Their study asks how structural input encoding, mutual coupling, and WPNN layer depth govern scalar-regression expressivity. RAOA instead defines temporal feedback recurrence over one latent state with explicit problem and mixer passes. We use tied two-scalar controls to study depth on exact Max-Cut, 3-SAT, hypergraph Max-Cut, and partition objectives with known optima, including cross-depth execution without reoptimization, then reuse the same operator structure as a zero-initialized residual adapter in pretrained Transformers. The two studies are complementary: one characterizes programmable-metasurface WPNN expressivity in a physics-consistent cavity model; the other studies a task-structured recurrent operator across optimization and foundation-model regimes.

\subsection{Stacked intelligent metasurfaces and electromagnetic neural networks}
Stacked intelligent metasurfaces (SIMs) form electromagnetic neural networks by cascading trainable metasurface layers. They have been studied for task-oriented semantic communication, wireless processing, sensing, and edge inference~\citep{huang2025simsemcom,liu2025sim,an2026wave,stylianopoulos2026minn}. \citet{stylianopoulos2025ota} explicitly optimize RIS/SIM-enabled wireless propagation as one or more hidden DNN layers for end-to-end edge inference. These works provide strong radio-frequency precedents for learned wave-domain inference; RAOA does not claim that role for the wireless channel as new.

Most SIM/MINN depth in these studies is spatial or cascaded: signals traverse multiple trainable metasurface layers, possibly embedded in a wider end-to-end network. RAOA's evaluated depth is temporal/feedback recurrence over a latent state, with alternating problem/mixer semantics and pass count that can be increased while tying the learned controls. The axes are complementary: a future hardware realization could combine stacked spatial surfaces with RAOA-style temporal recurrence.

\subsection{Structured iterative networks and neural operators}
Algorithm unrolling interprets iterations of an optimization algorithm as trainable network layers~\citep{gregor2010lista,monga2021unrolling}. Neural operators learn mappings between function spaces and provide a broader language for learned transformations beyond finite-dimensional feed-forward blocks~\citep{kovachki2023neuraloperator,li2021fno}. RAOA adopts the structured-iteration perspective: each layer is a pair of problem/mixer passes over a shared latent state rather than a generic dense transformation.

The alternation is reminiscent of QAOA~\citep{farhi2014qaoa}, which alternates problem and mixer operators. The analogy is architectural only. RAOA uses a classical real-valued state, tanh saturation, modeled radio passes, and standard backpropagation. We make no quantum equivalence or quantum-advantage claim.

\subsection{Parameter-efficient adaptation of foundation models}
LoRA adds low-rank residual updates to frozen weights and is a standard parameter-efficient adaptation baseline~\citep{hu2022lora}. Our shallow MLP baseline supplies a generic nonlinear bottleneck at similar parameter scale. RAOA is compared to both because its contribution is structural: we ask whether an adapter constrained to repeated problem/mixer passes can approach the quality of an ordinary digital nonlinear adapter. The results do not support a universal ranking. LoRA is best on Qwen WikiText, MLP is best on Gemma and Phi WikiText and Phi GSM8K, while the 768-token Qwen and Gemma audit makes RAOA, MLP, and frozen reasoning accuracy nearly indistinguishable at paired example-bootstrap resolution. This mixed outcome is central to the paper's claim discipline.

\section{RAOA on Exact Discrete Optimization Problems}\label{app:discrete_optimization}

The main paper presents the four-problem modeled-depth comparison because it directly supports the central RAOA claim. This appendix supplies the exact encodings, instance statistics, complete depth table, and higher-order representation controls behind that result. The purpose is not to claim a general-purpose combinatorial optimizer; rather, we use small problems with exactly known optima to isolate how repeated RAOA passes behave across pairwise and higher-order objectives while the number of learned controls is held fixed.

\subsection{Objective encodings}
We work with binary variables $x_i\in\{0,1\}$ and the corresponding spins $s_i\in\{-1,+1\}$. Pairwise objectives are represented as QUBOs,
\begin{equation}
  E_{\rm QUBO}(x)=x^\top Qx+c,
\end{equation}
and are converted exactly to Ising form before the RAOA problem update. For weighted Max-Cut, the QUBO is chosen so that minimizing $E_{\rm cut}$ maximizes the cut weight. Number partition uses
\begin{equation}
  E_{\rm part}(x)=\left(\sum_i a_i(2x_i-1)\right)^2,
\end{equation}
so an exact zero corresponds to a balanced partition.

RAOA also accepts higher-order binary optimization (HUBO) objectives directly. For a SAT clause $C$, define the unsatisfied-literal factor
\begin{equation}
 u_\ell(x)=
 \begin{cases}
 1-x_i, & \ell=+i,\\
 x_i, & \ell=-i.
 \end{cases}
\end{equation}
The SAT energy is the number of unsatisfied clauses,
\begin{equation}
  E_{\rm SAT}(x)=\sum_{C}\prod_{\ell\in C}u_\ell(x),
\end{equation}
which is order $k$ for $k$-SAT. For hypergraph Max-Cut, an edge is penalized only when all of its vertices receive the same binary label:
\begin{equation}
 E_{\rm hcut}(x)=\sum_{e}w_e\left[\prod_{i\in e}x_i+\prod_{i\in e}(1-x_i)\right].
\end{equation}
Thus higher-order interactions can be presented natively to the RAOA local-field computation rather than first reduced to pairwise form.

The four fixed multi-seed instances are all small enough that the logical optimum can be verified exactly. Max-Cut-8 uses the fixed weighted graph in the checked-in experiment script; partition-8 uses weights $(1,2,3,4,6,7,9,10)$; 3-SAT-8 contains 12 fixed three-literal clauses; and hypergraph Max-Cut-8 contains eight fixed three- and four-vertex hyperedges. Table~\ref{tab:discrete_instances} reports the exact degeneracy of each optimum. The final column is the probability that a uniformly random binary assignment is already optimal, which helps contextualize the RAOA hit rates.

\begin{table}[h]
\centering
\caption{Exact small discrete-optimization instances used in the appendix study.}
\label{tab:discrete_instances}
\begin{tabular}{lrrrr}
\toprule
Problem & Order & Variables & Optimum degeneracy & Uniform optimum rate \\
\midrule
Max-Cut-8 & 2 & 8 & 2 & 0.78\% \\
Partition-8 & 2 & 8 & 10 & 3.91\% \\
3-SAT-8 & 3 & 8 & 36 & 14.06\% \\
Hypergraph Max-Cut-8 & 4 & 8 & 28 & 10.94\% \\
\bottomrule
\end{tabular}
\end{table}

\subsection{Fixed-control modeled-pass sweep}
To isolate modeled execution depth from parameter count, we share one $\gamma$ and one $\beta$ across every RAOA stage. Therefore depths $L\in\{1,2,4,8\}$ use exactly two learned scalar controls but $2L\in\{2,4,8,16\}$ modeled RF passes. The QUBO/HUBO coefficients are fixed data defining each problem instance; they are not learned controls. Each optimization uses Adam for 120 iterations at learning rate $0.04$, 64 continuous training replicas, 512 independent evaluation replicas, and no post-hoc local search. We repeat the complete optimization with five seeds. Optimum-hit probability is measured against the exact discrete optimum.

\begin{table*}[t]
\centering
\caption{Exact-optimum hit probability for tied-control RAOA (mean $\pm$ seed standard deviation, five seeds). The learned-control count is two in every cell; RF-pass counts are modeled.}
\label{tab:discrete_depth}
\begin{tabular}{lrrrr}
\toprule
Problem & 2 RF passes & 4 RF passes & 8 RF passes & 16 RF passes \\
\midrule
Max-Cut-8 & $0.071\pm0.017$ & $0.204\pm0.014$ & $0.433\pm0.034$ & $0.690\pm0.347$ \\
Partition-8 & $0.064\pm0.007$ & $0.025\pm0.012$ & $0.041\pm0.006$ & $0.008\pm0.005$ \\
3-SAT-8 & $0.617\pm0.020$ & $0.890\pm0.026$ & $0.980\pm0.013$ & $1.000\pm0.000$ \\
Hypergraph Max-Cut-8 & $0.352\pm0.021$ & $0.622\pm0.014$ & $0.856\pm0.011$ & $0.575\pm0.467$ \\
\bottomrule
\end{tabular}
\end{table*}

The problem dependence is substantial. Max-Cut-8 improves by almost an order of magnitude in mean optimum-hit probability from two to sixteen passes, although one of the five $L=8$ optimization runs collapses to zero success and produces the large final error bar. 3-SAT-8 is much more stable: the mean increases from $0.617$ at two passes to $0.980$ at eight passes and reaches $1.000$ for all five $L=8$ runs. Hypergraph Max-Cut improves consistently through eight passes, from $0.352$ to $0.856$, but the $L=8$ setting becomes bimodal across seeds: three optimizations reach approximately $0.94$--$0.96$ success while two collapse near zero. Partition remains the counterexample already highlighted in the main paper; added depth does not help and eventually reduces success below the two-pass setting.

These results refine the statement that modeled pass depth can be a computational resource. Repeating the same two learned controls can markedly increase success, and this holds for both pairwise Max-Cut and native higher-order 3-SAT. However, deeper RAOA also changes the optimization landscape: at sixteen passes, Max-Cut and hypergraph Max-Cut can become seed-sensitive, while partition remains non-monotonic. Thus the useful resource is \emph{available recurrent depth}, not a guarantee that every additional pass improves every objective.

\subsection{Native higher-order objectives versus quadratization}

The main paper emphasizes independent-instance depth generality; Table~\ref{tab:discrete_quadratization} retains the fixed-instance native-versus-quadratized values and resource expansion.

A standard way to execute a HUBO on a pairwise optimizer is to introduce ancilla variables and quadratize the objective. RAOA does not require that reduction because the problem layer evaluates the derivative of a sparse spin polynomial directly. We therefore compare native-HUBO RAOA with its exactly quadratized QUBO under the same two-control parameterization, optimizer, depth, and evaluation protocol. The comparison is intentionally at a fixed RAOA budget rather than a matched total-state dimension: quadratization itself changes the optimization problem by increasing that dimension.

For 3-SAT-8, exact quadratization expands 8 logical variables to 17 executed variables (9 ancillas); for hypergraph Max-Cut-8 it expands 8 to 16 (8 ancillas). Table~\ref{tab:discrete_quadratization} reports the full pass sweep.

\begin{table*}[t]
\centering
\caption{Native higher-order RAOA versus exact quadratization. Success is exact-optimum hit probability, mean $\pm$ seed standard deviation over five seeds.}
\label{tab:discrete_quadratization}
\small
\setlength{\tabcolsep}{5pt}
\begin{tabular}{llrrrr}
\toprule
Problem & Representation & Vars. & Anc. & 4 passes & 8 passes \\
\midrule
3-SAT-8 & Native HUBO & 8 & 0 & $0.896\pm0.013$ & $0.989\pm0.003$ \\
3-SAT-8 & Quadratized QUBO & 17 & 9 & $0.052\pm0.011$ & $0.161\pm0.023$ \\
Hypergraph Max-Cut-8 & Native HUBO & 8 & 0 & $0.603\pm0.012$ & $0.830\pm0.008$ \\
Hypergraph Max-Cut-8 & Quadratized QUBO & 16 & 8 & $0.056\pm0.006$ & $0.119\pm0.011$ \\
\bottomrule
\end{tabular}
\end{table*}

At eight modeled RF passes, native 3-SAT reaches $0.989$ success versus $0.161$ after quadratization, and native hypergraph Max-Cut reaches $0.830$ versus $0.119$. These experiments do \emph{not} establish that native higher-order execution always dominates quadratization: the quadratized problems contain roughly twice as many executed variables, and other penalty choices or optimization budgets could change the comparison. What they show is narrower and relevant to RAOA: when higher-order local fields are available directly, avoiding ancilla expansion can substantially simplify the fixed-budget search on these exact small instances.

For completeness, at sixteen passes the native 3-SAT result remains essentially perfect ($0.9996\pm0.0008$), while the quadratized version is unstable ($0.084\pm0.108$). Hypergraph Max-Cut at sixteen passes is itself seed-sensitive in native form ($0.576\pm0.466$), reinforcing that native higher-order representation removes ancilla overhead but does not remove the general depth-optimization instability observed above.

\paragraph{Claim boundary.}
All results in this section are synthetic classical RAOA mechanism studies with exactly known small optima. The higher-order problem layer is a differentiable phenomenological local-field computation. We do not claim measured nonlinear RF hardware for three- and four-body interactions, quantum equivalence, asymptotic advantage over digital combinatorial solvers, or hardware speed or energy improvement from these experiments.

\clearpage
\section{Conceptual System Views}\label{app:visuals}
\begin{figure*}[h]
\centering
\includegraphics[width=0.98\textwidth]{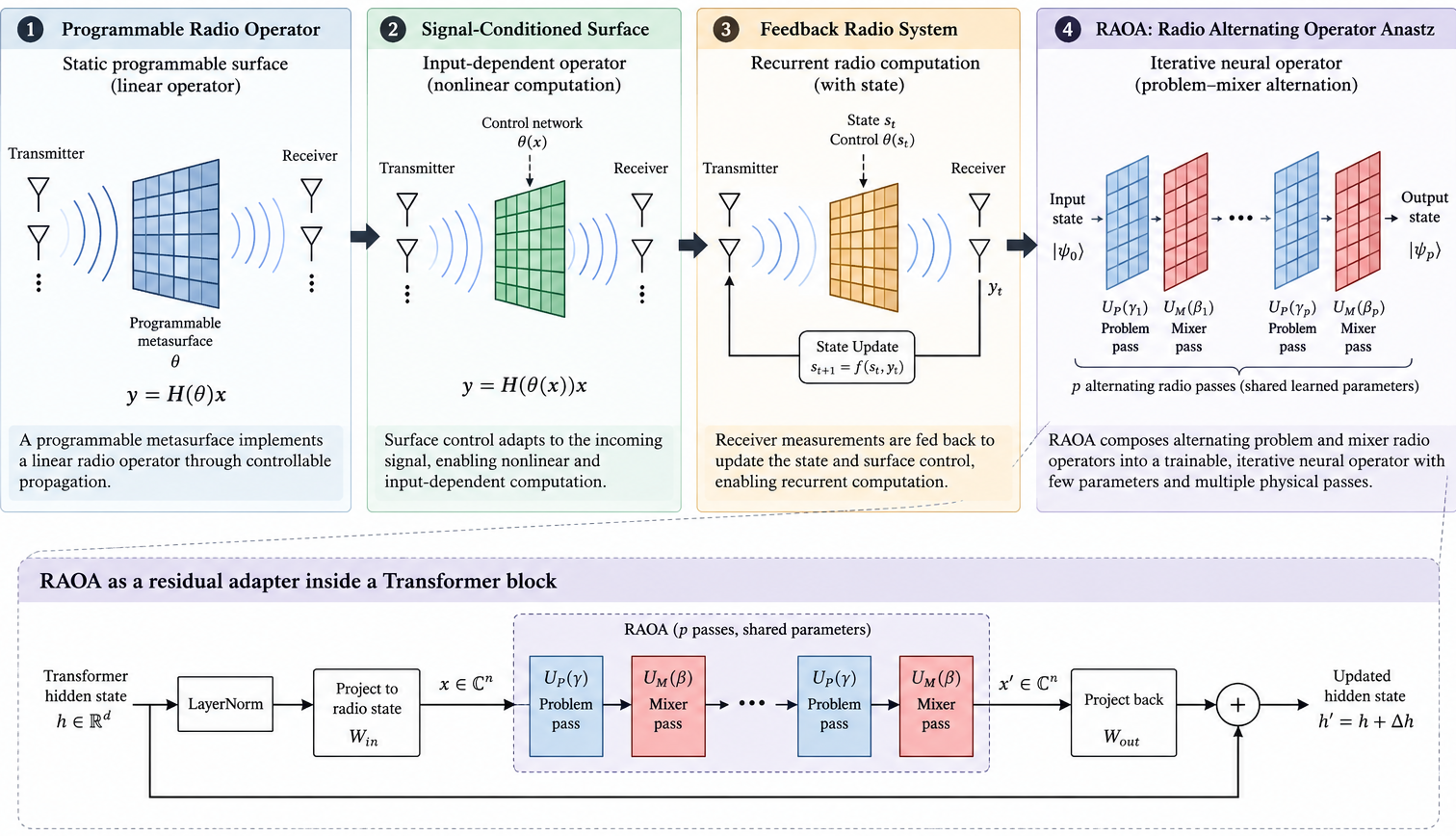}
\caption{\textbf{AI-generated conceptual system illustration.} This appendix-only raster image provides qualitative intuition for programmable propagation, signal-conditioned control, feedback, repeated alternating passes, and Transformer insertion. It is not experimental evidence. The image is pre-cropped to remove the original project-title band and generated caption; all quantitative plots in this paper are produced programmatically from frozen numerical results.}
\label{fig:ai_system_overview}
\end{figure*}

\section{Expressivity Hierarchy: Formulation and Controlled Evidence}\label{app:hierarchy}

Figure~\ref{fig:ladder} in the main paper summarizes this hierarchy and its two controlled mechanism checks. The derivations below make each transition explicit.

\subsection{Static programmable propagation is linear in the transmitted signal}
For fixed propagation $H$ and fixed programmable surface response $\Phi$,
\begin{equation}
  \mathcal T_{\Phi}(x)=H\Phi x
\end{equation}
is linear: $\mathcal T_{\Phi}(\alpha x_1+\beta x_2)=\alpha\mathcal T_{\Phi}(x_1)+\beta\mathcal T_{\Phi}(x_2)$. Reconfiguration changes which linear map is implemented but does not change this property for one programmed state.

\subsection{Signal-conditioned control creates nonlinear end-to-end maps}
Let the surface state depend on the input, $\Phi=\Phi(x)$. Then
\begin{equation}
  \mathcal T(x)=H\Phi(x)x
\end{equation}
is generally nonlinear. Even the scalar case makes the mechanism explicit: choosing $\phi(x)=x^2$ yields $y=\phi(x)x=x^3$. More generally, if a control network produces $g(x)$ and the surface performs elementwise multiplication, the physical layer contains products $g_i(x)x_j$, which can generate higher-order interactions despite linear propagation at each frozen control state.

Our controlled cubic task targets $y=x^3$. The best scalar linear surface reaches MSE $2.3715\times10^{-2}$, whereas the signal-conditioned reference reaches numerically zero error in the ideal software model. Under the preregistered severe synthetic hardware profile, signal-conditioned MSE remains approximately $2.4\times10^{-3}$ versus $2.57\times10^{-2}$ for the static linear surface. These values are mechanism tests, not hardware measurements.

\subsection{Feedback introduces state and temporal computation}
A feedback radio system evolves as
\begin{equation}
  z_{t+1}=F_\theta(z_t,x_t),\qquad y_t=G_\theta(z_t,x_t).
\end{equation}
For a linear state update and linear readout, the output is a linear functional of the input history. Consequently, a delayed multiplicative target such as $x_{t-d_1}x_{t-d_2}$ cannot in general be represented by the linear loop alone. In the controlled reference, linear feedback obtains roughly chance-level delayed-product accuracy ($49.3\%$), while nonlinear feedback reaches $100\%$ under the ideal software model. The purpose of this example is to verify the additional temporal/nonlinear mechanism rather than to claim a general reservoir-computing benchmark.

\subsection{RAOA adds iterative modeled-pass depth}
RAOA alternates a problem-dependent update and a mixing update. A tied program can reuse only two learned controls while increasing modeled RF passes, separating modeled recurrent depth from parameter count. Appendix~\ref{app:discrete_optimization} expands this mechanism across Max-Cut, partition, 3-SAT, and hypergraph Max-Cut with five optimization seeds and exact optima; Sec.~\ref{app:depth} then isolates the Max-Cut and partition scale-control question. The resulting picture is deliberately non-monotonic: extra passes can add substantial computation, but whether optimization can exploit that depth depends on the objective and seed.

\section{RAOA and Foundation-Model Adapter Details}\label{app:method}

\subsection{General RAOA state update}
For recurrent state $z^{(\ell)}\in\mathbb{R}^m$, context $c$, and differentiable energy $E_\psi(z;c)$, define
\begin{align}
  u^{(\ell)} &= \nabla_z E_\psi(z^{(\ell)};c),\\
  s^{(\ell)} &= \frac{1}{m}\sum_{i=1}^{m}|u_i^{(\ell)}|+\epsilon,\\
  \tilde z^{(\ell)} &= \tanh\!\left(z^{(\ell)}-\sin\gamma_\ell\frac{u^{(\ell)}}{s^{(\ell)}}\right),\\
  z^{(\ell+1)} &= \tanh\!\left(\mathcal M_{\beta_\ell}(\tilde z^{(\ell)})\right),\qquad
  \mathcal M_\beta(z)=\cos\beta\,z+\sin\beta\,Mz.
\end{align}
The mixer $M$ is a fixed Hadamard-derived orthogonal matrix. The implementation detaches $s^{(\ell)}$ in the backward pass. One stage is one problem pass plus one mixer pass, hence $2L$ modeled RF passes for depth $L$.

A general multilinear energy is
\begin{equation}
  E_\psi(z;c)=\sum_{S\in\mathcal S}w_S(c)\prod_{i\in S}z_i,
  \qquad
  u_i=\sum_{S\ni i}w_S(c)\prod_{j\in S\setminus\{i\}}z_j.
\end{equation}
The foundation-model specialization truncates this at order two,
\begin{equation}
  E_\psi(z;c)=\tfrac12z^\top Jz+b(c)^\top z,\qquad b(c)=W_{\rm in}c,
\end{equation}
with symmetric zero-diagonal $J$, giving $u=Jz+b(c)$. Exact QUBO and HUBO studies instead use the known multilinear spin polynomial. Three- and four-body terms therefore enter through products in the local field rather than through ancilla variables; this remains a phenomenological software representation rather than evidence of arbitrary many-body RF realization.

\subsection{Normalization and positive global rescaling}
The detached local normalization explains the Hamiltonian-rescaling control. If $E'_\psi=qE_\psi$ for $q>0$, then $u'=qu$ and $s'=qs$ up to the additive $\epsilon$ term, so $u'/s'\approx u/s$. A positive global coefficient scale should therefore leave the ideal RAOA trajectory essentially unchanged. Sec.~\ref{app:depth} verifies this directly. RAOA uses ordinary unrolled autograd; it is neither an implicit-differentiation method nor a claim of quantum-QAOA equivalence.

\subsection{Residual Transformer insertion and parameter count}
For hidden state $h$, let $c=\operatorname{LN}(h)$. The adapter uses $b(c)=W_{\rm in}c$ both as the context field and to initialize $z^{(0)}=\tanh(b(c))$, executes $L$ RAOA stages, and maps the final state back to model width:
\begin{equation}
   \Delta h=W_{\rm out}z^{(L)}(c),\qquad
   f_{\rm adapt}(h)=f_{\rm base}(h)+a\Delta h,\qquad a|_{t=0}=0.
\end{equation}
Hence the modified model equals the pretrained model exactly at insertion time. With biased input and output projections, coupling stored as an unconstrained $m\times m$ parameter, and $L$ learned angle pairs, one adapter has
\begin{equation}
  P_{\rm core}=2dm+m^2+d+m+2L,\qquad P_{\rm wrapped}=P_{\rm core}+1
\end{equation}
trainable scalar parameters. The mixer in the final foundation-model experiments is fixed rather than learned.

\subsection{Modeled RF impairments}
For control angle $\alpha\in\{\gamma_\ell,\beta_\ell\}$,
\begin{equation}
  \hat\alpha=Q_b(\alpha+\varepsilon_\alpha),\quad
  \varepsilon_\alpha\sim\mathcal N(0,\sigma_\alpha^2),\qquad
  \hat z=z+\varepsilon_z,\quad \varepsilon_z\sim\mathcal N(0,\sigma_z^2I).
\end{equation}
During domain-randomized training, angle quantization uses a straight-through estimator. The RAOA-specific profile values are:
\begin{table}[h]
\centering
\caption{Modeled RAOA impairment profiles. These are synthetic perturbation levels, not device measurements.}
\label{tab:hardware_profiles}
\begin{tabular}{lrrr}
\toprule
Profile & Angle bits & $\sigma_\alpha$ (rad) & $\sigma_z$\\
\midrule
Ideal & -- & 0 & 0\\
Mild & 8 & 0.005 & 0.002\\
Moderate & 6 & 0.020 & 0.010\\
Severe & 4 & 0.060 & 0.040\\
\bottomrule
\end{tabular}
\end{table}

\section{Theoretical Properties of RAOA}\label{app:theory}
This section isolates several structural properties of RAOA. The first two propositions analyze an \emph{unsaturated fixed-normalization surrogate}: elementwise $\tanh$ is replaced by the identity and the scalar field normalization is held fixed. This surrogate is used only to expose composition; it is not a global equivalence to the full nonlinear operator. The final propositions state properties of the pre-saturation problem step and of the full bounded recurrence under explicit assumptions. The alternating problem/mixer viewpoint follows the QAOA/alternating-operator template~\citep{farhi2014qaoa,hadfield2019alternating}, while the power-sequence terminology below is standard in Krylov methods~\citep{saad2003iterative}. Algorithm unrolling provides a broader classical perspective on repeated learned or structured iterations~\citep{monga2021unrolling}.

\begin{proposition}[Affine power expansion at fixed tied controls]\label{prop:affine_power}
Let
\[
E(z)=\tfrac12 z^\top Jz+b^\top z,
\qquad B_\beta=\cos\beta I+\sin\beta M,
\]
with symmetric $J$. In the unsaturated surrogate with fixed normalization $s>0$, let $\eta=\sin\gamma/s$ and tie $(\gamma,\beta)$ across stages. Then
\[
 z^{(\ell+1)}=Az^{(\ell)}+c,
 \qquad A=B_\beta(I-\eta J),\qquad c=-\eta B_\beta b,
\]
and after $L$ stages
\begin{equation}
 z^{(L)}=A^Lz^{(0)}+\sum_{k=0}^{L-1}A^k c.
 \label{eq:app_affine_power}
\end{equation}
Consequently, as depth varies, the trajectory is generated by the block-Krylov-type span of powers of the same matrix $A$ acting on $z^{(0)}$ and $c$. Increasing $L$ can therefore enlarge the available composed computation without increasing the two tied controls, although strict subspace growth is not guaranteed once the relevant minimal polynomial saturates.
\end{proposition}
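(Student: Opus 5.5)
The plan is to read off the one-stage affine recursion directly from the general update in Appendix~\ref{app:method}, and then unroll it by induction on $L$.

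\textbf{One-stage recursion.} For $E(z)=\tfrac12 z^\top Jz+b^\top z$ with symmetric $J$, the gradient is $u^{(\ell)}=Jz^{(\ell)}+b$. Symmetry matters here: for a general $J$ the gradient would be $\tfrac12(J+J^\top)z+b$. In the unsaturated surrogate the $\tanh$ is replaced by the identity, and the normalization is frozen at $s>0$. The problem substep then reads
\[
\tilde z^{(\ell)}=z^{(\ell)}-\eta(Jz^{(\ell)}+b)=(I-\eta J)z^{(\ell)}-\eta b,
\qquad \eta=\sin\gamma/s .
\]
The mixer substep is the linear map $\mathcal M_\beta(\tilde z)=B_\beta\tilde z$. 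Composing the two gives
\[
z^{(\ell+1)}=B_\beta(I-\eta J)z^{(\ell)}-\eta B_\beta b=Az^{(\ell)}+c .
\]
Because the controls are tied, $A$ and $c$ do not depend on $\ell$. This independence is the only place tying is used.

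\textbf{Unrolling.} I will prove Eq.~\ref{eq:app_affine_power} by induction on $L$. The base case $L=0$ has an empty sum, and $L=1$ is the recursion itself. For the inductive step,
\[
z^{(L+1)}=A\Bigl(A^Lz^{(0)}+\sum_{k=0}^{L-1}A^kc\Bigr)+c=A^{L+1}z^{(0)}+\sum_{k=1}^{L}A^kc+A^0c,
\]
which is the claimed formula at $L+1$.

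\textbf{Krylov consequence.} The unrolled formula shows that $z^{(L)}$ lies in the span of $\{A^kz^{(0)}\}_{k\le L}\cup\{A^kc\}_{k<L}$. This is a block Krylov space $\mathcal K_{L+1}(A,[z^{(0)},c])$. These spaces are nested in $L$, so they can only grow, and they never involve controls beyond $(\gamma,\beta)$.

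For the caveat, let $p$ be the minimal polynomial of $A$ restricted to the cyclic subspace generated by $z^{(0)}$ and $c$. Once $L\ge\deg p$, each further power $A^{L}$ is a linear combination of lower powers. The span then stabilizes, so strict growth is not guaranteed. A dimension bound of $m$ makes the same point even without the minimal polynomial.

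\textbf{Main obstacle.} The algebra is routine, so the main risk is stating the surrogate assumptions precisely. The normalization $s$ must be held constant across stages; the actual $s^{(\ell)}$ varies with the state, and a varying $s$ would give $\ell$-dependent matrices $A_\ell$ and break the power form. The "Consequently" clause must also be phrased as a span statement rather than a claim of strict growth.
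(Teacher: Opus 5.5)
Your proposal is correct and follows the paper's proof essentially step for step: compute $\nabla E=Jz+b$, remove saturation with $s$ held fixed to get the affine stage map $z\mapsto Az+c$, prove the power formula by induction on $L$, and read off the span statement. One cosmetic point: the span you list lies between $\mathcal K_L$ and $\mathcal K_{L+1}(A,[z^{(0)},c])$ rather than equaling the latter, and the $A$-invariant subspace generated by two vectors need not be cyclic, so ``is contained in'' and ``invariant subspace'' are the accurate phrasings (your minimal-polynomial argument is unaffected).
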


\begin{proof}
The quadratic gradient is $\nabla E(z)=Jz+b$. Removing saturation and using fixed $s$ gives the problem step $(I-\eta J)z-\eta b$; applying the linear mixer $B_\beta$ yields $z^{(\ell+1)}=Az^{(\ell)}+c$. Eq.~\ref{eq:app_affine_power} follows by induction: it is immediate for $L=1$, and substituting the depth-$L$ expression into $z^{(L+1)}=Az^{(L)}+c$ produces $A^{L+1}z^{(0)}+\sum_{k=0}^{L}A^k c$. The span statement follows directly. This is the same matrix-power structure underlying standard Krylov constructions~\citep{saad2003iterative}.
\end{proof}

The mixer can change the generated operator algebra even with tied controls. For example,
$A^2=B_\beta(I-\eta J)B_\beta(I-\eta J)$ contains terms involving $B_\beta J B_\beta$ and $B_\beta J B_\beta J$; when $M$ and $J$ do not commute, expanding $B_\beta$ introduces distinct words such as $MJ$, $JM$, and $MJM$. This does not imply that a particular mixer is always beneficial, consistent with the objective-dependent mixer ablation in Appendix~\ref{app:mechanism_ablation}.

\begin{proposition}[Dynamic-field degree growth versus a frozen field]\label{prop:degree_growth}
Let $E:\mathbb R^m\to\mathbb R$ be a polynomial of total degree $d\ge2$ and consider the unsaturated stage with either no normalization or fixed normalization
\[
 G(z)=B\bigl(z-\eta\nabla E(z)\bigr),
\]
where $B$ is linear and $\eta$ is a fixed scalar. Then
\[
 \deg G\le d-1,
 \qquad
 \deg\bigl(G^{\circ L}\bigr)\le(d-1)^L.
\]
Equality may fail because of coefficient cancellation or degeneracy; the bound states the maximum compositional degree available from dynamic field re-evaluation. If instead the initial field is frozen, $g_0=\nabla E(z^{(0)})$, and
\[
 z^{(\ell+1)}=B\bigl(z^{(\ell)}-\eta g_0\bigr),
\]
then
\begin{equation}
 z^{(L)}=B^Lz^{(0)}-\eta\sum_{j=1}^{L}B^j\nabla E(z^{(0)}),
 \label{eq:frozen_field_closed_form}
\end{equation}
and $z^{(L)}$, viewed as a polynomial function of $z^{(0)}$, has degree at most $d-1$ for every $L$.
\end{proposition}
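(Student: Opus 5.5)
The plan is to argue entirely through the total degree of vector-valued polynomial maps. For a polynomial map $F:\mathbb R^m\to\mathbb R^m$, I will write $\deg F$ for the maximum total degree of its components. The whole argument rests on two elementary facts. First, degree is subadditive under sums, and a fixed linear map $B$ cannot raise degree, so $\deg(BF)\le\deg F$. Second, composition is submultiplicative: $\deg(F\circ H)\le\deg F\cdot\deg H$. I would justify the second fact before anything else. Each component of $F$ is a linear combination of monomials $\prod_i z_i^{k_i}$ with $\sum_i k_i\le\deg F$. Substituting $z_i\mapsto H_i(z)$ turns such a monomial into a product of $\sum_i k_i$ polynomials, each of degree at most $\deg H$. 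That product has degree at most $\deg F\cdot\deg H$, and taking linear combinations preserves the bound. I expect this to be the only step needing care, because the rest is bookkeeping.

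For the stage map, $\nabla E$ has components $\partial E/\partial z_i$ of degree at most $d-1$, since differentiation lowers total degree by one. The identity term $z$ has degree $1$. Hence $z-\eta\nabla E(z)$ has degree at most $\max(1,d-1)=d-1$, using $d\ge2$, and applying $B$ keeps $\deg G\le d-1$. Induction on $L$ then gives the composition bound: the case $L=1$ is what was just shown, and $G^{\circ(L+1)}=G\circ G^{\circ L}$ together with submultiplicativity gives $\deg G^{\circ(L+1)}\le (d-1)\cdot(d-1)^L$. I will add a remark that only upper bounds are claimed, since leading terms can cancel. I will also note that for $d=2$ the bound correctly collapses to degree $1$, matching the affine picture of Proposition~\ref{prop:affine_power}.

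For the frozen-field recursion $z^{(\ell+1)}=Bz^{(\ell)}-\eta Bg_0$ with $g_0$ constant along the trajectory, I will prove Eq.~\ref{eq:frozen_field_closed_form} by induction. At $L=1$ the formula reads $Bz^{(0)}-\eta Bg_0$. For the inductive step, substituting $z^{(L)}$ gives $B^{L+1}z^{(0)}-\eta\sum_{j=2}^{L+1}B^jg_0-\eta Bg_0$, which is the formula at $L+1$. Finally, viewing this expression as a function of $z^{(0)}$, the term $B^Lz^{(0)}$ has degree $1$. The sum is a fixed matrix $\sum_{j=1}^{L}B^j$ applied to $\nabla E(z^{(0)})$, which has degree at most $d-1$. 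So the total degree is at most $\max(1,d-1)=d-1$, independent of $L$. This yields the contrast stated in the proposition.
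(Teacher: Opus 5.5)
Your proposal is correct and follows the paper's proof essentially step for step. Both arguments use $\deg\nabla E\le d-1$, the fact that linear maps do not raise degree, submultiplicativity of degree under composition with induction on $L$, and the frozen-field closed form obtained by substitution, each of whose terms has degree at most $d-1$. Your explicit justification of $\deg(F\circ H)\le\deg F\cdot\deg H$ and your written-out inductive step only fill in details the paper states without proof.
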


\begin{proof}
Because differentiation lowers total polynomial degree by at least one, $\deg\nabla E\le d-1$. A linear map does not increase total degree, hence $\deg G\le\max\{1,d-1\}=d-1$. For polynomial maps $f$ and $g$, $\deg(f\circ g)\le\deg(f)\deg(g)$; applying this relation inductively gives $\deg(G^{\circ L})\le(d-1)^L$. For a frozen field, repeated substitution gives Eq.~\ref{eq:frozen_field_closed_form}. Both $B^Lz^{(0)}$ and every $B^j\nabla E(z^{(0)})$ have degree at most $d-1$, so the degree cannot grow with $L$.
\end{proof}

For quadratic energies ($d=2$), Proposition~\ref{prop:degree_growth} reduces to linear or affine power composition, as in Proposition~\ref{prop:affine_power}. For cubic and quartic energies the bound permits degrees up to $2^L$ and $3^L$, respectively, before saturation. The full RAOA normalization and $\tanh$ make the exact map non-polynomial, so this result should be read as a structural surrogate rather than an exact degree statement for the implemented operator. Its empirical counterpart is the large degradation observed when the local field is frozen rather than recomputed.

\begin{proposition}[Problem-step scale invariance and local descent]\label{prop:problem_step}
Define $\mu(u)=m^{-1}\sum_i|u_i|$ and $\mathcal N_\epsilon(u)=u/(\mu(u)+\epsilon)$. For any $a>0$,
\begin{equation}
 \mathcal N_\epsilon(au)-\mathcal N_\epsilon(u)
 =\frac{\epsilon(a-1)}{(a\mu(u)+\epsilon)(\mu(u)+\epsilon)}\,u.
 \label{eq:scale_invariance_exact}
\end{equation}
Hence positive global energy rescaling is exactly removed when $\epsilon=0$ and differs from exact invariance only through the displayed $O(\epsilon)$ term when $\epsilon>0$.

Separately, suppose $E$ has $L_E$-Lipschitz gradient on a region containing $z$ and $y=z-\eta\nabla E(z)$. Then the pre-saturation problem step satisfies
\begin{equation}
 E(y)\le E(z)-\eta\left(1-\frac{L_E\eta}{2}\right)\|\nabla E(z)\|_2^2.
 \label{eq:descent_bound}
\end{equation}
Thus $0<\eta<2/L_E$ is sufficient for strict descent whenever $\nabla E(z)\ne0$. In RAOA, $\eta=\sin\gamma/s(z)$ for the current problem step. No monotonic-energy claim is made for the subsequent $\tanh$ or mixer pass.
\end{proposition}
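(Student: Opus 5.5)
The proof splits into two independent parts matching the two displays: an algebraic identity for the normalization, and the standard descent lemma for the pre-saturation problem step.

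\textbf{Scale invariance.} We use only the positive homogeneity of the mean absolute value: for $a>0$, $\mu(au)=a\mu(u)$, since $|au_i|=a|u_i|$. Then $\mathcal N_\epsilon(au)=au/(a\mu(u)+\epsilon)$. Subtracting $u/(\mu(u)+\epsilon)$ over the common denominator $(a\mu(u)+\epsilon)(\mu(u)+\epsilon)$ leaves the numerator
\[
a(\mu(u)+\epsilon)-(a\mu(u)+\epsilon)=\epsilon(a-1),
\]
which is Eq.~\ref{eq:scale_invariance_exact}. When $\epsilon=0$ and $u\neq0$ the right-hand side vanishes, so the rescaling is removed exactly. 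When $\epsilon>0$, the coefficient is bounded by $\epsilon|a-1|/\bigl((a\mu+\epsilon)(\mu+\epsilon)\bigr)$, which is $O(\epsilon)$ for fixed $\mu(u)>0$. This is the claimed deviation term. The link to energy rescaling is that $E'=aE$ gives $\nabla E'=a\nabla E$, so $u$ becomes $au$.

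\textbf{Local descent.} We invoke the standard quadratic upper bound for functions with $L_E$-Lipschitz gradient. Assume the segment $[z,y]$ lies in the region where the Lipschitz condition holds; this is how ``a region containing $z$'' should be read, e.g.\ a convex region containing both $z$ and $y$. Write
\[
E(y)-E(z)=\int_0^1\nabla E\bigl(z+t(y-z)\bigr)^\top(y-z)\,dt.
\]
Add and subtract $\nabla E(z)^\top(y-z)$, then bound the remainder by Cauchy--Schwarz and the Lipschitz property:
\[
\int_0^1 L_E\,t\,\|y-z\|^2\,dt=\tfrac{L_E}{2}\|y-z\|^2.
\]
Substituting $y-z=-\eta\nabla E(z)$ gives $-\eta\|\nabla E\|^2+\tfrac{L_E\eta^2}{2}\|\nabla E\|^2$, which is Eq.~\ref{eq:descent_bound}. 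For $0<\eta<2/L_E$ the factor $\eta(1-L_E\eta/2)$ is strictly positive, so descent is strict whenever $\nabla E(z)\ne0$.

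In RAOA the step size is $\eta=\sin\gamma/s(z)$. The normalization $s$ is detached and evaluated at the current $z$, so it is a fixed scalar within the step and the lemma applies directly. Note that $\sin\gamma>0$ is needed for $\eta>0$.

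\textbf{Main obstacle.} The only delicate point is the domain hypothesis. The descent lemma needs the whole segment from $z$ to $y$ inside the Lipschitz region, not merely $z$. I would state this explicitly, or note that for polynomial energies on the bounded cube the gradient is globally Lipschitz on any compact convex set containing the step. The other subtlety is the degenerate case $u=0$, where $\mu=0$ and the $\epsilon=0$ invariance statement should exclude it. No claim is made about the $\tanh$ or mixer passes, so nothing further needs proving.
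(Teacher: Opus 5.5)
Your proposal is correct and follows the paper's proof: positive homogeneity $\mu(au)=a\mu(u)$ plus a common-denominator subtraction gives the identity, and the standard $L_E$-smoothness upper bound with $y-z=-\eta\nabla E(z)$ substituted gives the descent inequality. Your integral derivation of that bound and your explicit caveats are sound refinements of points the paper leaves implicit: the segment $[z,y]$ must lie in the Lipschitz region, $u\neq0$ is needed when $\epsilon=0$, and $\sin\gamma>0$ is needed for $\eta>0$.
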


\begin{proof}
For $a>0$, $\mu(au)=a\mu(u)$; subtracting the two normalized vectors gives Eq.~\ref{eq:scale_invariance_exact}. For the descent statement, the standard smoothness inequality for an $L_E$-Lipschitz gradient gives
\[
 E(y)\le E(z)+\nabla E(z)^\top(y-z)+\tfrac{L_E}{2}\|y-z\|_2^2.
\]
Substituting $y-z=-\eta\nabla E(z)$ yields Eq.~\ref{eq:descent_bound}. This is the usual descent-lemma argument for smooth first-order methods~\citep{nesterov2004convex}.
\end{proof}

\begin{proposition}[Bounded recurrence and finite-depth operator error]\label{prop:bounded_error}
Let $F$ denote one full RAOA stage, including the final elementwise $\tanh$. For every finite input, $F(z)\in(-1,1)^m$, so after one stage the recurrent state lies in the compact cube $\mathcal D=[-1,1]^m$. Let $\widehat F:\mathcal D\to\mathcal D$ be an approximate/compiled stage and assume, in a chosen norm,
\[
 \sup_{z\in\mathcal D}\|\widehat F(z)-F(z)\|\le\delta,
 \qquad
 \|F(x)-F(y)\|\le K\|x-y\|
 \quad\forall x,y\in\mathcal D.
\]
Starting both recurrences from the same state, after $L$ stages
\begin{equation}
 \|\widehat z^{(L)}-z^{(L)}\|
 \le \delta\sum_{k=0}^{L-1}K^k.
 \label{eq:operator_error_bound}
\end{equation}
In particular, the bound is $L\delta$ for $K=1$ and at most $\delta/(1-K)$ for any $K<1$.
\end{proposition}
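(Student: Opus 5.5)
The plan is a standard telescoping (discrete Grönwall) argument applied to the error sequence $e_\ell=\|\widehat z^{(\ell)}-z^{(\ell)}\|$. First comes the invariance claim. The final operation of $F$ is an elementwise $\tanh$, whose range is $(-1,1)$. So for any finite pre-activation, $F(z)\in(-1,1)^m\subset\mathcal D$. For finiteness, note that the problem substep's argument $z-\sin\gamma\,u/s$ is finite because $s\ge\epsilon>0$. The mixer argument is then finite as a linear image of a point in $(-1,1)^m$. Since $\widehat F$ maps $\mathcal D$ into $\mathcal D$ by assumption, both trajectories stay in $\mathcal D$ once they start there. I would take the common starting state $z^{(0)}\in\mathcal D$, or else apply the argument after the first ideal stage. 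This guarantees that the hypotheses, which are only assumed on $\mathcal D$, apply at every step.

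Next is the one-step recursion. Write
\[
\widehat z^{(\ell+1)}-z^{(\ell+1)}=\bigl[\widehat F(\widehat z^{(\ell)})-F(\widehat z^{(\ell)})\bigr]+\bigl[F(\widehat z^{(\ell)})-F(z^{(\ell)})\bigr].
\]
By the triangle inequality, the uniform approximation bound on $\mathcal D$, and the $K$-Lipschitz property on $\mathcal D$, this gives $e_{\ell+1}\le\delta+Ke_\ell$. The initial error is $e_0=0$. Induction then gives $e_L\le\delta\sum_{k=0}^{L-1}K^k$: the claim holds at $L=1$ since $e_1\le\delta$, and substituting the depth-$L$ bound yields $\delta+K\delta\sum_{k=0}^{L-1}K^k=\delta\sum_{k=0}^{L}K^k$. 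The special cases follow directly. For $K=1$ the sum is $L$. For $K<1$ the finite geometric sum is bounded by the full series $1/(1-K)$.

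The algebra is routine. The only step needing care is the first, namely ensuring that the compiled trajectory never leaves the domain where $\delta$ and $K$ are controlled. I would handle it by ordering the two error terms as above: the approximation error is always evaluated at the compiled state $\widehat z^{(\ell)}\in\mathcal D$, and the Lipschitz bound is applied between two points of $\mathcal D$. If the starting state lies outside $\mathcal D$, I would state the result from stage one onward, since after one ideal stage the state lies in $(-1,1)^m$. I would also note explicitly that no contraction ($K<1$) is assumed, matching the remark in the main text.
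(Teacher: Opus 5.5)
Your proposal is correct and follows the paper's proof: the same split of $\widehat z^{(\ell+1)}-z^{(\ell+1)}$ into an approximation term evaluated at the compiled state plus a Lipschitz term, the recursion $e_{\ell+1}\le\delta+Ke_\ell$ with $e_0=0$, induction, and the geometric series for the special cases. Your check that both trajectories remain in $\mathcal D$, so that the hypotheses apply at every step, is only implicit in the paper and is a worthwhile addition.
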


\begin{proof}
Elementwise $\tanh$ maps every finite scalar to $(-1,1)$, establishing boundedness. Let $e_\ell=\|\widehat z^{(\ell)}-z^{(\ell)}\|$. Then
\[
 e_{\ell+1}
 \le\|\widehat F(\widehat z^{(\ell)})-F(\widehat z^{(\ell)})\|
   +\|F(\widehat z^{(\ell)})-F(z^{(\ell)})\|
 \le\delta+Ke_\ell.
\]
With $e_0=0$, induction gives Eq.~\ref{eq:operator_error_bound}. The stated special cases follow from the geometric series. The proposition does not assert that the implemented stage is contractive ($K<1$); it only quantifies how a known stage-wise approximation error can accumulate over finite depth.
\end{proof}

Taken together, Propositions~\ref{prop:affine_power}--\ref{prop:bounded_error} explain four empirical observations without turning them into universal guarantees: tied controls can still yield richer repeated composition; dynamic field re-evaluation can add nonlinear compositional order; the normalized problem step suppresses arbitrary positive global objective scale; and imperfect physical operators may remain useful when their stage error is sufficiently controlled over the execution horizon.

\section{Experimental Protocols and Full Results}\label{app:experiments}

\subsection{WikiText architecture transfer}
The WikiText protocol uses 65,536 training tokens, 65,536 validation tokens for hyperparameter selection, and full-test evaluation. Selected schedules are frozen before the test split is evaluated. Each trainable method uses seeds 1--3. The number of scored test tokens is 296,545 for Qwen3-4B, 289,941 for Gemma-3-4B-it, and 284,734 for Phi-4-mini-instruct.

\begin{table}[h]
\centering
\caption{Validation-selected WikiText schedules used for full-test confirmation.}
\label{tab:wikitext_schedules}
\begin{tabular}{llrr}
\toprule
Model & Family & Steps & Learning rate\\
\midrule
Qwen3-4B & LoRA & 300 & $10^{-4}$\\
          & RAOA & 300 & $10^{-3}$\\
          & MLP & 300 & $10^{-3}$\\
Gemma-3-4B-it & LoRA & 300 & $10^{-4}$\\
          & RAOA & 300 & $10^{-3}$\\
          & MLP & 300 & $3\times10^{-4}$\\
Phi-4-mini & LoRA & 300 & $10^{-4}$\\
          & RAOA & 300 & $10^{-3}$\\
          & MLP & 300 & $10^{-3}$\\
\bottomrule
\end{tabular}
\end{table}

\begin{table}[h]
\centering
\caption{Full-test WikiText NLL (mean $\pm$ seed standard deviation). Lower is better.}
\label{tab:wikitext_full}
\begin{tabular}{lrrrr}
\toprule
Model & Frozen & LoRA & RAOA & Shallow MLP\\
\midrule
Qwen3-4B & 3.3007 & $\mathbf{2.6426}\pm0.0016$ & $2.6851\pm0.0040$ & $2.6596\pm0.0023$\\
Gemma-3-4B-it & 5.9833 & $2.9046\pm0.0051$ & $2.9009\pm0.0055$ & $\mathbf{2.8763}\pm0.0061$\\
Phi-4-mini & 3.1010 & $2.6232\pm0.0006$ & $2.6503\pm0.0058$ & $\mathbf{2.6210}\pm0.0023$\\
\bottomrule
\end{tabular}
\end{table}

Relative to the frozen baselines, RAOA captures 96.0\%, 99.2\%, and 93.9\% of the shallow MLP NLL gain on Qwen, Gemma, and Phi, respectively. The average gain capture is approximately 96.4\%. These percentages measure recovery of MLP improvement, not statistical equivalence.

\subsection{Qwen3-4B validation-only RAOA depth diagnostic}
To test whether the exact-objective depth trend automatically transfers to language modeling, we froze the already selected Qwen RAOA schedule (300 steps, learning rate $10^{-3}$, all MLP blocks, state width 64) and changed only depth. Three-seed WikiText validation NLL is $2.58531\pm0.00459$ for $L=1$, $2.58781\pm0.00329$ for $L=2$, and $2.58415\pm0.00062$ for $L=4$. The full range is only $0.00366$ NLL and is non-monotonic. We therefore treat the primary $L=1$ foundation-model comparison as an operator-transfer experiment and do not claim a language-model depth benefit.

\subsection{Matched digital recurrent Qwen control}\label{app:recurrent_baseline}
To test whether Qwen WikiText transfer can be explained by generic tied recurrence rather than RAOA's specific problem/mixer structure, we evaluate the existing depth-2 digital recurrent bottleneck in all text MLP blocks and choose its latent width to match the trainable-parameter budget of primary RAOA ($L=1$, state width 64). The resulting recurrent model has 12,038,436 trainable parameters versus 12,038,544 for RAOA, a difference of only 108 parameters ($0.0009\%$). This comparison matches trainable parameter count, not computational cost.

The schedule is selected without the test split. A six-cell WikiText validation screen uses 65,536 training tokens, 65,536 validation tokens, seed 1, steps $\{100,300\}$, and learning rates $\{10^{-4},3\times10^{-4},10^{-3}\}$. The minimum validation NLL, $2.57828$, occurs at 300 steps and learning rate $10^{-3}$. That schedule is selected before test evaluation. Three-seed full-test confirmation at that frozen schedule gives
\[
  \mathrm{NLL}_{\rm recurrent}=2.67267\pm0.00289,
  \qquad \Delta\mathrm{NLL}=-0.62800
\]
from the common frozen Qwen baseline $3.30067$. For context, the same full-test protocol gives LoRA $2.6426\pm0.0016$, shallow MLP $2.6596\pm0.0023$, and RAOA $2.6851\pm0.0040$. The recurrent control therefore recovers about $98.0\%$ of the shallow MLP improvement over frozen and is numerically $0.0124$ NLL below RAOA, but we do not attach a significance claim to this four-way comparison. The result supplies a matched recurrent digital control while narrowing the foundation-model interpretation: RAOA transfers effectively, but this experiment does not show that its radio-alternating structure is uniquely advantageous over generic recurrence.

\subsection{GSM8K free-running reasoning}
The GSM8K selection stage trains on examples $[0,256)$ and selects hyperparameters using a disjoint holdout beginning at example 256 from the training split. Final schedules are frozen before any use of the official test split. The canonical full-test evaluation uses all 1,319 test examples, a common 384-token generation cap, model-native stop tokens, one deterministic frozen run, and three seeds for each trainable method. Cap-limited outputs remain in exact-match scoring.

\begin{table}[h]
\centering
\caption{Holdout-selected GSM8K schedules used for full-test confirmation.}
\label{tab:gsm8k_schedules}
\begin{tabular}{llrr}
\toprule
Model & Family & Steps & Learning rate\\
\midrule
Qwen3-4B & LoRA & 100 & $10^{-5}$\\
          & RAOA & 25 & $10^{-5}$\\
          & MLP & 50 & $10^{-5}$\\
Gemma-3-4B-it & LoRA & 10 & $3\times10^{-6}$\\
          & RAOA & 5 & $3\times10^{-5}$\\
          & MLP & 10 & $3\times10^{-5}$\\
Phi-4-mini & LoRA & 50 & $10^{-5}$\\
          & RAOA & 50 & $10^{-4}$\\
          & MLP & 100 & $10^{-4}$\\
\bottomrule
\end{tabular}
\end{table}

\begin{table}[h]
\centering
\caption{Canonical 384-token full-test GSM8K exact-match accuracy (\%, mean $\pm$ seed standard deviation for trainable methods).}
\label{tab:gsm8k_full}
\begin{tabular}{lrrrr}
\toprule
Model & Frozen & LoRA & RAOA & Shallow MLP\\
\midrule
Qwen3-4B & 81.35 & $76.55\pm1.70$ & $\mathbf{81.88}\pm0.12$ & $81.65\pm0.16$\\
Gemma-3-4B-it & 73.77 & $\mathbf{74.75}\pm1.29$ & $73.97\pm0.19$ & $73.21\pm0.26$\\
Phi-4-mini & 79.23 & $83.04\pm0.23$ & $79.45\pm0.39$ & $\mathbf{83.73}\pm0.53$\\
\bottomrule
\end{tabular}
\end{table}

The canonical cap-hit rates are non-negligible for Qwen and Gemma: roughly 10.7--14.3\% and 11.9--16.3\%, respectively. We therefore preregistered a sensitivity-only audit that re-evaluates the already frozen Qwen and Gemma schedules at 768 tokens, with no hyperparameter selection or retraining based on the longer-cap test results. Per-example predictions are retained. Phi is not rerun because its adapted 384-token cap-hit rates are mostly below 2\%.

\begin{table}[h]
\centering
\caption{768-token GSM8K sensitivity audit for Qwen and Gemma. Accuracy is full-test EM (\%); cap is max-token-limit-hit rate (\%).}
\label{tab:gsm8k_768}
\begin{tabular}{llrrr}
\toprule
Model & Family & EM & Seed std. & Cap hit\\
\midrule
Qwen3-4B & Frozen & 87.95 & -- & 0.38\\
          & LoRA & 81.65 & 1.96 & 3.23\\
          & RAOA & 87.95 & 0.49 & 0.53\\
          & MLP & 88.05 & 0.25 & 0.63\\
Gemma-3-4B-it & Frozen & 81.12 & -- & 4.17\\
          & LoRA & 79.83 & 0.98 & 3.36\\
          & RAOA & 81.43 & 0.27 & 4.25\\
          & MLP & 81.25 & 0.29 & 4.40\\
\bottomrule
\end{tabular}
\end{table}

For paired inference, correctness is first averaged across the three training seeds within each trainable family for each of the 1,319 examples, then examples are resampled 10,000 times with a fixed bootstrap seed. Table~\ref{tab:gsm8k_pairwise} reports accuracy differences in percentage points. These intervals characterize example sampling conditional on the completed seed ensemble; they are not a full retraining-level uncertainty analysis.

\begin{table}[h]
\centering
\caption{Paired example-bootstrap differences at 768 tokens (A$-$B, percentage points; 95\% CI).}
\label{tab:gsm8k_pairwise}
\begin{tabular}{llrr}
\toprule
Model & Pair & Delta & 95\% CI\\
\midrule
Qwen3-4B & RAOA$-$Frozen & 0.00 & $[-0.78,\,0.78]$\\
          & RAOA$-$MLP & -0.10 & $[-0.66,\,0.43]$\\
          & LoRA$-$Frozen & -6.29 & $[-8.11,\,-4.50]$\\
Gemma-3-4B-it & RAOA$-$Frozen & +0.30 & $[-0.53,\,1.16]$\\
          & RAOA$-$MLP & +0.18 & $[-0.45,\,0.81]$\\
          & LoRA$-$Frozen & -1.29 & $[-2.68,\,0.13]$\\
\bottomrule
\end{tabular}
\end{table}

The long-cap audit changes the interpretation of the small canonical differences rather than establishing a stable ranking: Qwen RAOA, MLP, and frozen converge near 88\%, and Gemma RAOA, MLP, and frozen converge near 81\%. Qwen LoRA remains 6.29 points below frozen with a paired interval excluding zero. For completeness, RAOA$-$LoRA is $+6.29$ points on Qwen (95\% CI $[4.60,7.96]$) and $+1.59$ points on Gemma ($[0.38,2.86]$); MLP$-$LoRA is $+6.39$ ($[4.68,8.04]$) and $+1.42$ ($[0.15,2.65]$), respectively. The full-test 384-token results remain valid fixed-cap measurements and are retained above as protocol history and sensitivity evidence.

\subsection{Gemma RAOA placement and coverage}\label{app:placement}
The placement ablation uses one common schedule (50 steps, learning rate $3\times10^{-5}$), three seeds, and all 1,319 GSM8K test examples. This is a fixed-schedule mechanism study, not the primary tuned comparison.

Figure~\ref{fig:structure_robustness} (left) in the main paper shows the placement curve; Table~\ref{tab:placement_full} preserves the exact values and parameter counts.

\begin{table}[h]
\centering
\caption{Full-test Gemma placement results.}
\label{tab:placement_full}
\begin{tabular}{lrrr}
\toprule
Placement & Blocks & Params (M) & GSM8K EM (\%)\\
\midrule
All & 34 & 11.370 & $74.30\pm0.38$\\
Every 2 & 17 & 5.685 & $74.00\pm0.51$\\
Every 4 & 9 & 3.010 & $73.69\pm0.54$\\
Last 2 & 2 & 0.669 & $73.87\pm0.19$\\
First+last & 2 & 0.669 & $73.21\pm0.50$\\
First 2 & 2 & 0.669 & $73.01\pm0.27$\\
\bottomrule
\end{tabular}
\end{table}

A 128-example holdout pilot had ranked first-two placement highest. The full-test evaluation reverses that conclusion, so the pilot is retained only as a sampling-sensitivity example and is not used as a final claim.

\subsection{Modeled-pass depth and Hamiltonian normalization}\label{app:depth}
The depth control uses Max-Cut-8 and partition-8, depths $L\in\{1,2,4,8\}$, two tied learned controls at every depth, five optimization seeds, 64 training replicas, 512 evaluation replicas, and ideal and moderate synthetic profiles. We compare the raw Hamiltonian to coefficient-RMS normalization:
\begin{equation}
  (J,h,c_0)\mapsto (J/s,h/s,c_0/s),\qquad s>0,
\end{equation}
where $s$ is the RMS of nonzero unique Ising coupling and field coefficients. Since this is a positive global rescaling,
\begin{equation}
  \arg\min_z E(z)=\arg\min_z E(z)/s.
\end{equation}

Across all 16 combinations of problem, profile, and depth, 15 raw-versus-normalized mean success values are exactly identical. The only nonzero difference is $1.95\times10^{-4}$ at partition depth 8 under the moderate profile. This rules out simple global coefficient scale as a sufficient explanation for the partition-depth anomaly.

\subsection{Independent-instance depth and cross-depth control transfer}\label{app:random_depth}
The independent-instance depth study uses 12 independently generated instances for each of Max-Cut, partition, 3-SAT, and hypergraph Max-Cut at $n\in\{8,12\}$. Five optimizer seeds are averaged within each problem instance before the cross-instance mean and standard deviation are computed. Every train depth learns only one shared $\gamma$ and one shared $\beta$. Each learned pair is then evaluated at every requested depth without reoptimization.

\begin{table*}[h]
\centering
\caption{Random-instance exact-optimum success when controls are trained only at two passes and reused unchanged at deeper execution depths. Values are mean $\pm$ standard deviation across 12 independent instances after averaging optimizer seeds within instance.}
\label{tab:random_depth_transfer}
\small
\setlength{\tabcolsep}{5pt}
\begin{tabular}{llrrrr}
\toprule
Family & $n$ & 2 passes & 4 passes & 8 passes & 16 passes\\
\midrule
3-SAT & 8 & $0.543\pm0.148$ & $0.862\pm0.114$ & $0.971\pm0.038$ & $0.997\pm0.008$\\
3-SAT & 12 & $0.367\pm0.104$ & $0.745\pm0.117$ & $0.923\pm0.067$ & $0.980\pm0.026$\\
Hypergraph Max-Cut & 8 & $0.586\pm0.062$ & $0.817\pm0.050$ & $0.945\pm0.026$ & $0.995\pm0.003$\\
Hypergraph Max-Cut & 12 & $0.441\pm0.044$ & $0.725\pm0.048$ & $0.914\pm0.030$ & $0.984\pm0.014$\\
Max-Cut & 8 & $0.0785\pm0.0273$ & $0.175\pm0.069$ & $0.291\pm0.130$ & $0.390\pm0.226$\\
Max-Cut & 12 & $0.0158\pm0.0086$ & $0.0520\pm0.0324$ & $0.111\pm0.069$ & $0.175\pm0.131$\\
Partition & 8 & $0.0356\pm0.0162$ & $0.0465\pm0.0241$ & $0.0565\pm0.0362$ & $0.0733\pm0.0560$\\
Partition & 12 & $0.0304\pm0.0111$ & $0.0419\pm0.0168$ & $0.0489\pm0.0478$ & $0.0594\pm0.126$\\
\bottomrule
\end{tabular}
\end{table*}

Matched-depth optimization is also broadly positive but exposes an optimization caveat. Relative to matched $L=1$, all 12 3-SAT instances improve at $L=2,4,8$ for both sizes. At $L=8$, hypergraph Max-Cut improves on 11/12 instances for $n=8$ and 12/12 for $n=12$; Max-Cut improves on 10/12 and 12/12, respectively. Partition is mixed: at $n=8$, $L=8$ improves 2/12 and worsens 10/12; at $n=12$, it improves 5/12 and worsens 7/12. The contrast with Table~\ref{tab:random_depth_transfer} indicates that repeated execution can remain useful even when depth-specific optimization fails to exploit it reliably.

\paragraph{Tied versus untied controls.}
A separate five-seed fixed-instance sweep compares two tied controls with $2L$ untied per-stage controls under ideal, moderate, and severe profiles. Under ideal evaluation, the four-problem portfolio mean is $0.5701$ for tied versus $0.5689$ for untied at eight passes (2 versus 8 learned controls), and $0.5659$ versus $0.5323$ at sixteen passes (2 versus 16 controls). Individual objectives differ substantially---for example, untied controls help deep Max-Cut but hurt deep hypergraph Max-Cut---so the result is not an argument that tying is universally better. It supports the narrower statement that the observed depth effect is not generally explained by simply adding per-stage control parameters.

\subsection{Mechanism ablation}\label{app:mechanism_ablation}
The preregistered mechanism control fixes depth at $L=4$ (eight modeled RF passes), uses five seeds, and applies the same optimization schedule to every variant without variant-specific rescue tuning. Table~\ref{tab:mechanism_ablation} shows that freezing the local field at its initial state is consistently destructive, whereas mixer removal or replacement is objective-dependent.

\begin{table}[h]
\centering
\caption{Mechanism ablation at eight modeled RF passes. Values are mean exact-optimum success over five seeds. ``Frozen field'' computes the local field once on the initial state and reuses it at every problem pass.}
\label{tab:mechanism_ablation}
\small
\setlength{\tabcolsep}{4.2pt}
\begin{tabular}{lrrrrr}
\toprule
Problem & Full & $\beta{=}0$ & Frozen field & Identity & Rand. orth.\\
\midrule
3-SAT & 0.985 & 0.978 & 0.483 & 0.980 & 0.987\\
Hypergraph Max-Cut & 0.850 & 0.853 & 0.040 & 0.879 & 0.856\\
Max-Cut & 0.430 & 0.319 & 0.060 & 0.320 & 0.317\\
Partition & 0.0356 & 0.105 & 0.0062 & 0.111 & 0.076\\
\bottomrule
\end{tabular}
\end{table}

The field-recomputation effect is large on 3-SAT, hypergraph Max-Cut, and Max-Cut; the four-problem mean falls from $0.575$ for full RAOA to $0.147$ with a frozen initial field. By contrast, there is no universal mixer ordering. The Hadamard mixer is materially useful on Max-Cut, but beta-zero, identity, or random-orthogonal mixing is nearly neutral on 3-SAT and hypergraph Max-Cut and can improve partition under this matched-depth protocol. We therefore attribute the most consistent recurrent mechanism to re-evaluating the problem field on the evolving state, while treating cross-coordinate mixing as an objective-dependent resource rather than a universally necessary ingredient.

\subsection{RIS propagation model, geometry, and physical interpretation}\label{app:ris_model}
The constrained realizability study instantiates the programmable aperture as a passive narrowband reconfigurable intelligent surface (RIS). At the communication-model level, a phase-programmed RIS is commonly represented by a diagonal reflection matrix embedded between transmitter-to-surface and surface-to-receiver channels~\citep{wu2019joint,liu2021ris}. Physics-based analyses refine this abstraction by deriving aperture- and distance-dependent path loss from physical optics or Green-function formulations and by distinguishing near- and far-field regimes~\citep{ozdogan2020pathloss,danufane2021green}. Free-space RIS path-loss models of this type have also been compared with fabricated-surface measurements in anechoic environments~\citep{tang2021rispathloss}. We use the same first-order geometric structure, but deliberately stop short of an absolute-power or full-wave hardware model.

\paragraph{Physical RIS embodiment.}
A practical reflective RIS is a planar or nearly planar array of electromagnetically coupled unit cells whose local impedance is electronically configured so that the reflected field acquires a programmable complex coefficient. Implementations commonly use resonant patches or meta-atoms loaded by switches or tunable reactances; a digital control network sets the element states. For example, a fabricated $16\times10$ surface at $5.8\,$GHz used patch elements with RF PIN diodes and a control circuit for beam steering~\citep{kashyap2021ris5g}, while Tang et al.~\citep{tang2021rispathloss} measured several fabricated metasurfaces to validate RIS free-space path-loss models. Our coefficient $e^{j\phi_{p,k}}$ is therefore the ideal continuous, unit-magnitude limit of such programmable reflection states. Real cells generally have finite phase resolution, loss, element patterns, and phase--amplitude coupling; these effects are deliberately separated from the operator-shape experiment below.

\paragraph{Scalar spherical-wave channel.}
Let $t_a\in\mathbb{R}^3$ and $r_b\in\mathbb{R}^3$ denote transmit and receive mode locations, and let $s_{p,k}$ denote element $k$ of panel $p$. At carrier $f_c$ with wavelength $\lambda=c/f_c$, the modeled complex field transfer between two points separated by $d$ is
\begin{equation}
  g(d)=\frac{\lambda}{4\pi d}\exp\!\left(-j\frac{2\pi d}{\lambda}\right).
  \label{eq:ris_green_scalar}
\end{equation}
For point sets $A$ and $B$, $G_{B\leftarrow A}$ contains the corresponding pairwise values of $g$. Panel $p$ has ideal unit-amplitude coefficients
\begin{equation}
  \Phi_p=\operatorname{diag}\!\left(e^{j\phi_{p,1}},\ldots,e^{j\phi_{p,K}}\right),
\end{equation}
and contributes the single-bounce operator
\begin{equation}
  H_p(\phi_p)=G_{R\leftarrow S_p}\,\Phi_p\,G_{S_p\leftarrow T}.
  \label{eq:ris_single_bounce}
\end{equation}
With $P$ panels, the experiment coherently sums independent parallel paths,
\begin{equation}
  H_{\rm RIS}(\phi)=\sum_{p=1}^{P}H_p(\phi_p),
  \label{eq:ris_parallel_panels}
\end{equation}
with no direct Tx--Rx term and no RIS-to-RIS cascade. Thus the model retains exact point-to-point spherical phase and $1/d$ field decay instead of reducing each aperture to a plane-wave steering vector. This distinction is relevant here: at $10\,$GHz, $\lambda\simeq3.0\,$cm, while the panel's $0.80\,$m transverse span is about $27\lambda$ and the modeled separations are sub-meter to meter scale. Near-/far-field dependence and aperture scaling are known to matter in RIS links~\citep{ozdogan2020pathloss,tang2021rispathloss,danufane2021green}.

\paragraph{Geometry used in the reported sweep.}
The pairwise $n=8$ problems use nine Tx and nine Rx modes because one fixed $+1$ carrier augments the eight recurrent coordinates. For every geometry seed, positions are drawn once and reused for both the problem and mixer programs. If $\mathcal U(q)$ denotes an independent coordinate-wise uniform perturbation in $[-q_d/2,q_d/2]$, then
\begin{align}
 t_a &= (0,0,0)+\mathcal U(0.01,0.12,0.12),\nonumber\\
 r_b &= (1.20,0.10,0.04)+\mathcal U(0.01,0.12,0.12),\label{eq:ris_tx_rx_geometry}\\
 s_{p,k} &= c_p+\mathcal U(0.02,0.80,0.80),\nonumber
\end{align}
all in meters, where each panel has $K=128$ elements and
\begin{equation}
 c_p=\left(0.20+\frac{0.80p}{P+1},\ y_p,\ 0.03(p-1)\right),\quad
 y_p=\begin{cases}-0.18,&p\ \text{odd},\\+0.20,&p\ \text{even}.
 \end{cases}
 \label{eq:ris_panel_centers}
\end{equation}
We sweep $P\in\{1,2,4\}$ and three geometry seeds. The thin $x$ span and larger $y$--$z$ span make each element cloud near-planar, but the points are intentionally irregular rather than a manufacturable $\lambda/2$ lattice; this avoids a special regular-array rank pathology and makes the study a geometry-constrained operator test rather than a specific fabricated-panel design. Surface amplitudes and coordinates are fixed. Only the continuous phases are optimized, giving 128, 256, or 512 programmable variables per phase program.

\paragraph{Compilation and receiver readout.}
The problem and mixer are two phase programs on the same geometry. For target real operator $T$, the compiler first fixes a positive gain scale $\alpha=\|H_{\rm RIS}(\phi_0)\|_F/\|T\|_F$ and then minimizes normalized complex Frobenius error,
\begin{equation}
 \phi^*=\arg\min_{\phi}\frac{\|H_{\rm RIS}(\phi)-\alpha T\|_F^2}{\|\alpha T\|_F^2}.
 \label{eq:ris_compile_objective}
\end{equation}
RAOA subsequently uses the in-phase component $\operatorname{Re}[H_{\rm RIS}(\phi^*)/\alpha]$; the residual quadrature fraction is reported separately. Because $\alpha$ removes overall gain, these experiments test whether the \emph{operator shape} can be approximated under passive phase-only propagation, not whether a given aperture closes an RF link budget.

\paragraph{What is and is not physically modeled.}
The model above is a standard first-order cascaded RIS abstraction with elementwise spherical propagation. They capture geometry-dependent phase, free-space distance loss, coherent panel addition, and passive phase-only control. More complete RIS models additionally account for effective element aperture and radiation pattern, polarization, mutual coupling, phase-amplitude coupling, finite-bit control, dispersion and loss, edge effects, and the environment-dependent direct/multipath channel~\citep{ozdogan2020pathloss,tang2021rispathloss,danufane2021green}. Our experiment omits those effects, as well as walls, clutter, blockage, channel estimation, synchronization, and feedback latency. In particular, ideal unit-amplitude point elements plus global gain normalization should not be interpreted as a calibrated radar-cross-section or received-power model; RIS path-loss scaling is sensitive to aperture and propagation regime, and simplistic interpretations can be misleading~\citep{bjornson2020myths}. The result is therefore a constrained \emph{realizability} check between an unconstrained dense operator and a measured system: more physical than an arbitrary matrix, but not a substitute for full-wave simulation or hardware calibration.

\subsection{Passive phase-only propagation realization}\label{app:phase_only}
Using the channel and geometry in Appendix~\ref{app:ris_model}, for pairwise Max-Cut-8 and partition-8 we compile the problem and mixer operators into explicit free-space transfer matrices formed by coherent sums of passive unit-amplitude panels. Each panel has 128 phase elements; the sweep uses 1, 2, or 4 panels and three independent geometry seeds. No trainable amplitude control, dense digital precoder or combiner, or direct path is used. A positive scalar gain normalization absorbs overall path loss. This study does not cover native three- and four-body local fields.

\begin{table}[h]
\centering
\caption{Normalized Frobenius error of phase-only compiled operators (mean $\pm$ standard deviation across three geometry seeds).}
\label{tab:phase_operator_error}
\small
\begin{tabular}{llrrr}
\toprule
Problem & Operator & 1 panel & 2 panels & 4 panels\\
\midrule
Max-Cut & Problem & $0.687\pm0.016$ & $0.519\pm0.021$ & $0.357\pm0.092$\\
Max-Cut & Mixer & $0.758\pm0.048$ & $0.573\pm0.055$ & $0.406\pm0.092$\\
Partition & Problem & $0.711\pm0.034$ & $0.535\pm0.059$ & $0.348\pm0.142$\\
Partition & Mixer & $0.758\pm0.048$ & $0.573\pm0.055$ & $0.406\pm0.092$\\
\bottomrule
\end{tabular}
\end{table}

The approximation error decreases consistently as passive resources increase, while the imaginary-energy fraction falls from roughly 0.44--0.48 with one panel to roughly 0.21--0.22 with four panels. Downstream Max-Cut also improves with panel count. With four panels and reoptimization of only the same two $\gamma,\beta$ controls, success is $0.056$, $0.144$, $0.305$, and $0.487$ at 2, 4, 8, and 16 modeled passes, compared with ideal values $0.068$, $0.195$, $0.428$, and $0.686$. Reusing ideal controls unchanged gives $0.054$, $0.145$, $0.301$, and $0.410$. At eight and sixteen passes, two-control reoptimization therefore retains about 71\% of ideal mean success, although geometry variability becomes large at sixteen passes. Partition does not admit a simple retention interpretation because its ideal matched-depth baseline is itself non-monotonic; several compiled settings exceed that unstable baseline. We treat that as another warning that operator approximation and optimization landscape interact.

\subsection{Modeled RF robustness}\label{app:hardware}
Qwen3-4B RAOA is trained in two modes: ideal only, or uniform per-step domain randomization over the four profiles in Table~\ref{tab:hardware_profiles}. Both use 300 steps, learning rate $10^{-3}$, 65,536 WikiText training tokens, and seeds 1--3. Every trained model is evaluated on the full WikiText test split under all four profiles.

Figure~\ref{fig:hardware_appendix} and Table~\ref{tab:hardware_full} give the complete synthetic-perturbation result. This control remains in the appendix because the main physical-scope figure emphasizes constrained phase-only propagation.

\begin{figure}[h]
\centering
\includegraphics[width=0.72\linewidth]{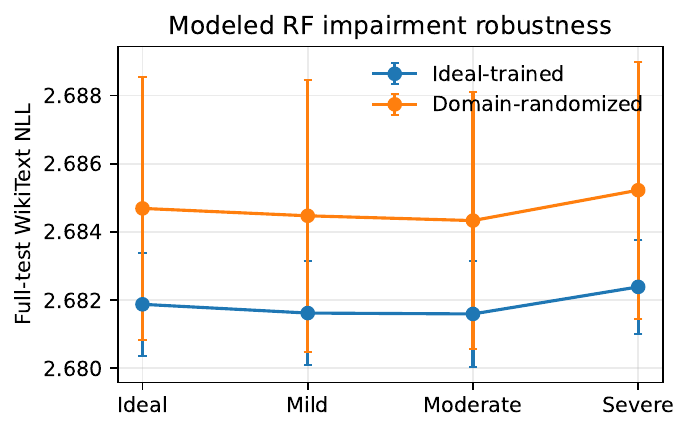}
\caption{\textbf{Synthetic RAOA perturbation sensitivity on Qwen3-4B WikiText.} Ideal-trained RAOA is nearly invariant through the preregistered severe angle/state profile; domain-randomized training is consistently slightly worse. These are modeled perturbations, not measured RF impairments.}
\label{fig:hardware_appendix}
\end{figure}

\begin{table}[h]
\centering
\caption{Modeled-impairment WikiText NLL (mean $\pm$ seed standard deviation).}
\label{tab:hardware_full}
\begin{tabular}{lrr}
\toprule
Evaluation profile & Ideal-trained & Domain-randomized\\
\midrule
Ideal & $2.68188\pm0.00151$ & $2.68469\pm0.00385$\\
Mild & $2.68162\pm0.00153$ & $2.68447\pm0.00399$\\
Moderate & $2.68159\pm0.00156$ & $2.68433\pm0.00377$\\
Severe & $2.68239\pm0.00137$ & $2.68523\pm0.00377$\\
\bottomrule
\end{tabular}
\end{table}

The severe profile changes ideal-trained NLL by only $+0.000511$ relative to ideal evaluation. Domain-randomized training is worse than ideal-trained RAOA by about $0.00274$--$0.00285$ NLL under every profile. The evidence therefore supports near-invariance to this modeled perturbation family, not a benefit from impairment-aware training.

\section{Detailed Limitations and Claim Boundaries}\label{app:limitations}
The main-text discussion is intentionally concise. This section records the assumptions, negative results, and boundaries that should accompany the positive findings.

\paragraph{Constrained propagation is not measured hardware.}
Reported RF-pass counts are modeled operator executions rather than measured wall-clock passes through a calibrated RF loop. The phase-only experiment replaces unconstrained dense effective matrices with explicit passive free-space transfer functions and therefore narrows the realizability gap, but it remains a simulation. It does not include measured antenna responses, metasurface calibration, synchronization, converter nonidealities, feedback delay, oscillator drift, mutual coupling, or end-to-end control latency. Accordingly, the paper makes no measured claim about energy efficiency, throughput, latency, or closed-loop robustness.

\paragraph{The phase-only compiler is deliberately restricted.}
The realizability study uses 1, 2, or 4 coherent passive panels with 128 phase-only elements each, fixed free-space propagation, no direct path, and no trainable dense digital precoder or combiner. Panels contribute as a coherent sum of independent single-bounce paths rather than a cascaded stack. For the 8-spin pairwise tasks, one fixed $+1$ bias carrier gives a 9-mode augmented target, so the one-, two-, and four-panel settings use 128, 256, and 512 programmable phase variables; this study tests feasibility under one explicit constraint, not physical resource efficiency. More panels reduce operator error and recover more Max-Cut behavior, but residual normalized matrix error remains substantial and deep executions show geometry variability. These results establish that useful computation can survive one concrete propagation constraint; they do not show that arbitrary target operators are synthesizable with a practical aperture budget or under a measured channel.

\paragraph{Higher-order physical realization remains open.}
Native third- and fourth-order local fields are executed directly in software and outperform the tested exact quadratizations under a fixed small RAOA pass-and-control budget. The comparison isolates representation and ancilla burden, not a universal solver advantage over all reductions or digital optimization methods. The phase-only compiler currently covers only pairwise operators, so the paper does not demonstrate a measured or constrained-wave realization of arbitrary many-body fields. Signal-conditioned propagation, feedback, auxiliary latent variables, or other nonlinear RF mechanisms may provide such a route, but that remains future work.

\paragraph{Exact-objective scaling is limited.}
The independent-instance study uses four objective families at $n\in\{8,12\}$, with exact optima available by enumeration. This is sufficient to test whether recurrent execution depth can add computation at fixed learned-control count, but it is not a scaling study. Success probabilities, optimization landscapes, and the usefulness of tied controls may change substantially at larger $n$, denser constraints, or different objective ensembles. No asymptotic complexity or advantage over mature combinatorial solvers is claimed.

\paragraph{Depth is useful but not uniformly trainable.}
Cross-depth reuse shows that a shallow-trained two-control program can often benefit from additional execution, yet matched-depth optimization is not monotonic. Partition remains inconsistent, and Max-Cut and hypergraph settings can become seed-sensitive at large depth. The tied-versus-untied comparison also shows that adding stage-specific controls does not systematically fix this behavior. Thus execution depth, parameterization, and optimization landscape should be treated as separate design variables rather than as a monotonic capacity axis.

\paragraph{Mixer utility is objective-dependent.}
Freezing the initial local field causes the dominant degradation across the tested objectives, supporting state-dependent field re-evaluation as the robust recurrent mechanism. In contrast, removing or replacing the Hadamard mixer is nearly neutral on 3-SAT and hypergraph Max-Cut, harmful on Max-Cut, and beneficial on the low partition baseline. We therefore do not claim that the chosen Hadamard mixer is universally necessary or optimal, nor do we perform a post-hoc mixer search after observing these results.

\paragraph{Theoretical results are structural, not global convergence guarantees.}
Appendix~\ref{app:theory} analyzes explicit surrogate regimes. The affine matrix-power result assumes an unsaturated quadratic update with fixed normalization, and the polynomial-degree result studies the corresponding unsaturated multilinear map. The local descent inequality applies to the pre-saturation problem step, not to the complete problem/mixer stage, and the compiled-operator error bound does not assume that the implemented RAOA stage is contractive. These propositions explain mechanisms and finite-depth composition; they do not establish global convergence, monotonic objective decrease, or optimality of deep RAOA.

\paragraph{Foundation-model evidence tests transfer more than depth.}
The primary Qwen3-4B, Gemma-3-4B-it, and Phi-4-mini-instruct experiments use a single RAOA stage inserted digitally as a residual adapter. They show that the operator can transfer across pretrained Transformer families at closely matched trainable-parameter budgets, not that a radio realization has been executed inside those models. A frozen-schedule Qwen validation diagnostic is essentially flat across $L=1,2,4$, so the exact-objective depth result should not be extrapolated to language models without additional evidence. The matched Qwen depth-2 digital recurrent baseline reaches $2.6727\pm0.0029$ full-test NLL, numerically between the shallow MLP and RAOA; accordingly, we also do not claim that RAOA's particular alternating structure is uniquely responsible for the foundation-model adaptation gain.

\paragraph{Parameter matching is not compute or hardware matching.}
LoRA, RAOA, and the bottleneck MLP are closely matched in trainable parameter count, which controls one important adaptation budget. Their sequential operation count, activation traffic, digital MAC cost, and prospective RF resource requirements are not matched. Likewise, the phase-only panel count is a physical-resource proxy rather than a measured energy or latency budget. The paper therefore does not claim end-to-end efficiency superiority over digital adapters.

\paragraph{Evaluation and statistical scope.}
Discrete-objective conclusions aggregate 12 independent instances per family-size setting after averaging five optimizer seeds within each instance, while foundation-model trainable methods use three seeds. These sample sizes support the reported empirical trends but do not establish universality across problem distributions or pretrained models. GSM8K retains cap-limited outputs in exact-match scoring and includes a preregistered 768-token sensitivity audit for the high-cap-hit Qwen and Gemma cases. Its paired intervals resample examples after averaging correctness across completed training seeds, so they measure example-sampling uncertainty conditional on those runs rather than full retraining uncertainty. The paper consequently emphasizes replicated directional effects and explicit negative results rather than fine-grained ranking claims.

\paragraph{Negative findings define the supported region.}
Several preregistered controls remain negative or mixed: matched-depth partition optimization is unstable; the Hadamard mixer is not universally best; deep exact settings can show seed or geometry sensitivity; the foundation-model depth diagnostic is flat under the frozen schedule; an early-layer placement pattern from a small Gemma holdout does not replicate on the full test set; and domain-randomized training does not help under the tested synthetic perturbation family. These outcomes are retained because they delimit where RAOA currently has evidence and where stronger claims would require new experiments or hardware.

\end{document}